\documentclass{article} 
\usepackage{natbib}
\usepackage{tmsty}


\usepackage{amsmath,amsfonts,amsthm,amssymb,mathrsfs,dsfont,bbm,bm}
\usepackage{mathtools}
\usepackage{amsmath,amssymb,amsthm,amsfonts,latexsym,bm,bbm,xspace,graphicx,float,mathtools,mathdots,physics,mathrsfs,dsfont}
\usepackage{braket,subcaption,ellipsis,textcomp,hhline,pifont,combelow,booktabs,csquotes}

\usepackage{wrapfig}

\newtheorem{theorem}{Theorem}
\newtheorem{lemma}{Lemma}

\newtheorem{remark}{Remark}










\def\eqref#1{Equation~\ref{#1}}









\def\floor#1{\lfloor #1 \rfloor}
\def\1{\bm{1}}










\DeclareMathAlphabet{\mathsfit}{\encodingdefault}{\sfdefault}{m}{sl}
\SetMathAlphabet{\mathsfit}{bold}{\encodingdefault}{\sfdefault}{bx}{n}


\def\gL{{\mathcal{L}}}











\newcommand{\R}{\mathbb{R}}





\usepackage{xspace}


\def\shownotes{1}  
\ifnum\shownotes=1
\newcommand{\authnote}[2]{{$\ll$\textsf{\footnotesize #1: #2}$\gg$}}
\else
\newcommand{\authnote}[2]{}
\fi

\usepackage{hyperref}
\usepackage{url}
\usepackage{enumitem}
\usepackage{multirow}
\usepackage[normalem]{ulem} 
\usepackage{listings}
\usepackage{xcolor}

\usepackage{pifont}
\newcommand{\cmark}{\ding{51}}%
\newcommand{\xmark}{\ding{55}}%

\usepackage{varioref}
\hypersetup{
    colorlinks,
    linkcolor={black!50!black},
    citecolor={green!50!black},
    urlcolor={green!80!black}
}

\usepackage[nameinlink,capitalize]{cleveref}

\newif\ifshowagcomments
\showagcommentstrue  

\newif\ifshowagrcomments
\showagrcommentsfalse  


\newcommand{\AGR}[1]{\ifshowagrcomments \textcolor{blue}{[AGR: #1]} \fi}

\definecolor{codebg}{rgb}{0.95,0.95,0.95}
\definecolor{keywordcolor}{rgb}{0.0, 0.5, 0.0} 
\definecolor{commentcolor}{rgb}{0.5, 0.5, 0.5} 
\definecolor{stringcolor}{rgb}{0.6, 0.0, 0.0} 
\definecolor{funcnamecolor}{rgb}{0.0, 0.0, 0.6} 
\lstset{
    backgroundcolor=\color{codebg},
    basicstyle=\ttfamily\tiny,
    keywordstyle=\bfseries\color{keywordcolor},  
    commentstyle=\itshape\color{commentcolor},   
    stringstyle=\color{stringcolor},             
    identifierstyle=\color{funcnamecolor},       
    showstringspaces=false,
    frame=single,
    language=Python,
    morekeywords={rearrange},                    
}

\title{On the Benefits of Memory for Modeling \\
Time-Dependent PDEs}


\author{
Ricardo Buitrago Ruiz\textsuperscript{1,2},
~~~ Tanya Marwah\textsuperscript{1},
~~~ Albert Gu\textsuperscript{1,2},
~~~ Andrej Risteski\textsuperscript{1} \\
\textsuperscript{1}Carnegie Mellon University~~~
\textsuperscript{2}Cartesia AI~~~\\
}
\date{}

%

\newcommand{\restrict}{\hspace{1pt}\rule[-0.5ex]{0.1ex}{1.25ex}\hspace{1pt}}

\newtheorem{prop}{Proposition}
\theoremstyle{definition}
\newtheorem{defi}{Definition}
\newtheorem{nota}{Notation}
\theoremstyle{definition}

\newcommand{\RB}[1]{#1}
\newcommand{\del}[1]{{\color[HTML]{8B0000} \protect }}

\begin{document}

\maketitle

\begin{abstract}
    Data-driven techniques 
have emerged as a promising alternative to traditional numerical methods for solving PDEs. For time-dependent PDEs, many approaches are Markovian---the evolution of the trained system only depends on the current state, and not the past states. In this work, we investigate the benefits of using memory for modeling time-dependent PDEs: that is, when past states are explicitly used to predict the future. 
Motivated by the Mori-Zwanzig theory of model reduction, we theoretically exhibit examples of simple (even linear) PDEs, in which a solution that uses memory is arbitrarily better than a Markovian solution. Additionally, we introduce Memory Neural Operator (MemNO), a neural operator architecture that combines recent state space models (specifically, S4) and Fourier Neural Operators (FNOs) to effectively model memory. 
We empirically demonstrate that when the PDEs are supplied in low resolution or contain observation noise at train and test time, MemNO significantly outperforms the baselines without memory---with up to $6 \times$ reduction in test error. Furthermore, we show that this benefit is particularly pronounced when the PDE solutions have significant high-frequency Fourier modes (e.g., low-viscosity fluid dynamics)
and we construct a challenging benchmark dataset consisting of such PDEs. \looseness=-1

\end{abstract}

\section{Introduction}
Time-dependent partial differential equations (PDEs) 
are central to modeling
various scientific and physical phenomena,
necessitating the design of accurate and computationally efficient solvers.
Recently, data-driven approaches based on neural networks~\citep{li2021physics, lu2019deeponet}  have emerged as an attractive alternative to classical numerical solvers, such as finite element and finite difference methods~\citep{leveque2007finite}.  
Classical approaches are computationally expensive in high dimension and struggle with PDEs which are sensitive to initial conditions. Learned approaches can often negotiate these difficulties better, at least for the PDE family they are trained on.

One example of a data-driven approach is learning a \emph{neural solution operator}, which for a time-dependent PDE learns to predict future states based on previous ones~\citep{Li2021Fourier, li2022transformer}. Recent works~\citep{ffno, pderefiner-Lippe2023-cp} 
suggest that optimal performance across various PDE families 
can be achieved by conditioning the models only on the immediate past
state---i.e., treating the system as Markovian.
In contrast, other works propose architectures that explicitly use memory of past states \citep{Li2021Fourier,li2022transformer,dpot-Hao2024-zl}. However, none of these works elucidate whether and when modeling memory is helpful. 

In this work, we demonstrate that when the solution of the PDE is only \emph{partially observed} (e.g. observed at low resolution), explicitly modeling memory can be beneficial. 
Partial observability is natural in many practical settings. This could be due to limited resolution
of the measurement devices collecting the data, inherent observational errors in the system, or prohibitive computational difficulty in generating high-quality synthetic data. 
This can lead to significant information loss, particularly in systems like turbulent flows~\citep{pope2001turbulent} or shock formation in fluid dynamics~\citep{christodoulou2007formation}, 
where PDEs change abruptly in space and time.
In such situations, classical results from dynamical systems (Mori-Zwanzig theory), suggest that the system becomes strongly non-Markovian. 

More precisely, Mori-Zwanzig theory~\citep{mori1965transport,zwanzig1961memory,moriks-Ma2018-up} is an ansatz to understand the evolution of a subspace of a system (e.g., the span of the $k$ largest Fourier components). Under certain conditions, this evolution can be divided into a \emph{Markovian term} (the evolution of the chosen subspace under the PDE), a \emph{memory term} (which is a weighted sum of the values of all previous iterates in the chosen subspace), and an \emph{``unobservable'' term}, which depends on the values of the initial conditions orthogonal to the selected subspace.

The main focus of this paper is studying \textit{when} explicitly modeling this memory term is useful.
We give an example of a very simple (in fact, linear) PDE where we show theoretically that the solution which takes into account the memory term can be arbitrarily better than the Markovian solution. We also provide a way to \textit{operationalize} the Mori-Zwanzig formalism by introducing \textbf{Memory Neural Operator (MemNO)}, 
a neural operator architecture that combines a Markovian operator to model the spatial dynamics of the PDE (such as the Fourier Neural Operator ~\citep{Li2021Fourier, ffno}), and a sequence model to maintain a compressed representation of the past states (such as the S4 state space model~\citep{s4-Gu2021-eb,gu2022train}). We show that MemNO outperforms its Markovian (memoryless) counterpart in PDEs observed on low resolution grids or with observation noise ---achieving up to $6 \times$ less
test error. Our contributions are as follows:
\begin{itemize}[leftmargin=*]
    \item We identify a setting in which explicitly modeling memory is helpful: namely, when there is a combination of \emph{lossy observations} of the solution of the PDE (e.g., due to limited resolution or observation noise) and significant contributions from \emph{high-frequency Fourier modes} in the solution.
    \item Even in simple PDEs, we \textit{theoretically} show the memory term can result in a solution that is (arbitrarily) closer to the correct solution, compared to the Markovian approximation ---in particular when the operator describing the PDE ``mixes'' the observed and unobserved subspace.\looseness=-1
    \item Across several families of one-dimensional and two-dimensional PDEs, we \textit{empirically} demonstrate 
    that when the input is supplied on a 
    low-resolution grid, or contains observation noise, neural operators with memory outperform Markovian operators by a significant margin. More precisely, to \emph{operationalize} memory, we introduce MemNO, a neural operator architecture combining Fourier Neural Operators (FNOs) and S4, which achieves the best performance across several Markovian and memory baselines. Furthermore, we perform ablations to confirm that MemNO's superior performance stems from its ability to model memory, whereas the FNO operator does not achieve good performance even when its parameter budget is increased.
    \item We observe that many current benchmarks for PDE solvers predominantly include PDEs in which there is little contribution from high-frequency Fourier modes. Consequently, we \emph{construct more challenging datasets} where the solutions have significant high-frequency modes, which we believe will be of broader significance to the community
    beyond testing the effects of memory---
    especially given recent meta-studies suggesting many current PDE benchmarks are too easy \citep{weakbaselines_McGreivy2024}.

\end{itemize}

\del{In this paper, we study the effects of explicit
memory when deploying neural operators for time-dependent PDEs. By memory
we loosely mean a representation of the previous states
of a PDE. 
We focus on PDE families that take the form  $\partial_t u(x,t) = \gL u(x, t)$
where $u(x, t): \Omega \times [0, T] \to \R$ is a time-dependent function 
defined over the domain $\Omega$, and $\gL$ is a (possibly non-linear) operator.
This is a generic form of a time-dependent PDE system and contains many PDE families of interest in different application domains (e.g., heat diffusion, Navier-Stokes, Kuramoto-Sivashinsky, Black-Scholes, Schr\"odinger equation to name a few). 
}

\del{
We introduce {\bf Memory Neural Operator (MemNO)}, 
an architecture which combines Fourier Neural Operator (FNO)~\citep{Li2021Fourier, ffno}
and 
the S4 architecture~\citep{s4-Gu2021-eb,gu2022train}. 
The MemNO architecture can be seen as an adaptation of the FNO~\citep{li2021physics}
architecture where the FNO layers model the spatial dynamics of the PDE 
while the S4 layers~\citep{s4-Gu2021-eb,gu2022train}
maintain a compressed memory of the past states. 
We choose S4 models over
recurrent architectures like LSTM~\citep{LSTM-10.1162/neco.1997.9.8.1735}
due to superior performance in modeling long range dependencies~\citep{s4-Gu2021-eb,tay2020long}, ease of training, 
and favorable memory and computational scaling with both state dimension 
and sequence length.
Through our experiments we 
show that for PDEs observed on low resolution grids and/or with observation noise, 
MemNO outperforms their Markovian (memoryless) baselines---achieving $6 \times$ less
test error.
}

\section{Related Work}
Data-driven neural solution operators~\citep{chen1995universal,bhattacharya2021model,lu2019deeponet,kovachki2023neural}
have emerged as the dominant approach for 
approximating 
PDEs, 
given their ability to model multiple families of PDEs at once, 
and their computational efficiency at inference time.
Many architectures have been proposed to improve their performance across different families of PDEs: ~\citet{Li2021Fourier} introduced
the Fourier Neural Operator (FNO), a resolution invariant 
architecture
that uses a convolution-based integral kernel 
evaluated in the Fourier space; ~\citet{ffno} later
introduced the Factorized FNO (FFNO) architecture, which builds upon and improves
the FNO architecture by adding separable spectral layers and residual connections;
\citet{cao2021choose} proposed a Transformer method with linear attention over the spatial sequence; other recent works have used U-Net-based architectures \citep{unet-brandstetter,uno-Rahman2022-as}.

Focusing on memory,~\citet{ffno} performed ablations that suggest
the Markov assumption is optimal 
and outperforms models that use the history of past timesteps as input. ~\citet{pderefiner-Lippe2023-cp} performed a similar study for long rollouts of the PDE solution and concluded the optimal performance is indeed achieved under the Markovian assumption.
We show that
these findings can be replicated only 
when the resolution of the observation grid is high.
On the other hand, we show that 
MemNO
effectively 
models memory to achieve much superior performance than Markovian operators in low resolution, while not dropping performance in the high resolution case.

Our work is motivated by the Mori-Zwanzig formalism~\citep{zwanzig1961memory, mori1965transport}
which shows that
a partial observation of the current state of the system 
can be compensated using memory of past states. 
\citet{moriks-Ma2018-up}  draws parallels to the Mori-Zwanzig equations and LSTM~\citep{LSTM-10.1162/neco.1997.9.8.1735}  
to model the dynamics of the $k$ largest Fourier components of a single PDE.
However, in our work, we study the benefits of memory in neural operators that learn the dynamics of an entire family of PDE. Furthermore, we 
show conditions under which not maintaining 
memory can result in arbitrarily large errors.

\section{Preliminaries}
\label{sec:preliminaries}
First, we introduce several definitions, as well as the Mori-Zwanzig formalism applied to our setting.

\subsection{Partial Differential Equations (PDEs)}

\begin{defi}[Space of square integrable functions]
    For integers $d$, $V$ and an open set $\Omega \subset \mathbb{R}^d$, we define $L^2\left(\Omega; \mathbb{R}^V\right)$ as the space of square integrable functions $u: \Omega \rightarrow \mathbb{R}^V$ such that $\Vert u \Vert_{L^2} \leq \infty$, where $\Vert u \Vert_{L^2} = \left(\int_\Omega \Vert u(x) \Vert^2_2 dx \right)^{\frac{1}{2}}$.
\end{defi}
\begin{nota}[Restriction]
    Given a function $u: \Omega \rightarrow \mathbb{R}^V$ and a subset $A\subset \Omega$, we denote $u \restrict_A$ as the restriction of $u$ to the domain $A$, i.e. $u\restrict_A: A \rightarrow \mathbb{R}^V$. 
\end{nota}

The general form of the PDEs we consider in this paper will be the following:

\begin{defi}[Time-Dependent PDE]
    \label{def:PDE}
    For an open set $\Omega \subset \mathbb{R}^d$ and an interval $[0,T]\subset \mathbb{R}$, a Time-Dependent PDE is the following expression:
    \begin{align}
        \label{eq_MZ:PDE}
        \frac{\partial u}{\partial t} (t,x) &= \mathcal{L}[u](t,x), \qquad\;\;\; \forall t \in[0,T], x \in \Omega, \\
        u(0,x) & = u_0(x), \qquad\qquad\;\; \forall x \in \Omega, \\
        \mathcal{B}[u\restrict_{\partial \Omega}](t) &= 0, \qquad\qquad\qquad\;\; \forall t \in [0,T]
    \end{align}
    where $\mathcal{L}: L^2\left(\Omega; \mathbb{R}^V\right)\rightarrow L^2\left(\Omega; \mathbb{R}^V\right)$ is a differential operator in $x$ which is independent of time, $u_0(x)\in L^2\left(\Omega; \mathbb{R}^V\right)$ and $\mathcal{B}$ is an operator defined on the boundary of $\partial \Omega$, commonly referred to as the boundary condition.
\end{defi}

In our theory and experiments, we will work with periodic boundary conditions (for a precise definition, see Definition \ref{def:periodic_boundary_conditions}).
Finally, we will frequently talk about a grid of a given resolution:
\begin{defi} [Equispaced grid with resolution $f$]
Let  $\Omega=[0,L]^d$. An equispaced grid with resolution $f$ in $\Omega$ is the following set $\mathcal{S}\subset\mathbb{R}^d$:
$$\mathcal{S} = \left \{ \left.\left(i_1 \frac{L}{f}, \cdots, i_k \frac{L}{f} \right) \right| 0 \leq i_k \leq f-1 \ \text{for} \ 1\leq k \leq d \right\}.$$
We will also denote by $|\mathcal{S}|$ the number of points in $\mathcal{S}$.
\end{defi}

\subsection{Mori-Zwanzig Formalism}
\label{sec:MZ}
The Mori-Zwanzig formalism \citep{Zwanzig2001} considers the setting in which an equation is known for a full system, but only a part of it is observed. It leverages the knowledge of past states of a system to compensate for the loss of information that arises from the partial observation of the current state. In our work, partial observation can refer to observing the solution at a discretized grid in space or only observing the Fourier modes up to a critical frequency. In the context of time-dependent PDEs, the Mori-Zwanzig principle is formalized as the \emph{Nakajima–Zwanzig equation} \citep{nakajima-transport}.

We will give an overview of the Nakajima-Zwanzig equation and set up the notation for the rest of the paper. Assume we have a PDE as in Definition \ref{def:PDE}. Let $\mathcal{P}: L ^2\left(\Omega; \mathbb{R}^V\right) \rightarrow L^2\left(\Omega; \mathbb{R}^V\right)$ be a linear projection operator. We define $\mathcal{Q}=I - \mathcal{P}$, where $I$ is the identity operator. In our setting, for the PDE solution at timestep $t$ $u_t \in  L ^2\left(\Omega; \mathbb{R}^V\right)$, $\mathcal{P}[u_t]$ is the part of the solution that we observe and $\mathcal{Q}[u_t]$ is the unobserved part. Thus, the initial information we receive for the system is $\mathcal{P}[u_0]$. Applying $\mathcal{P}$ and $\mathcal{Q}$ to Equation \ref{eq_MZ:PDE} and using $u = \mathcal{P}[u] + \mathcal{Q}[u]$, we get:
\begin{align}
    \label{eq_MZ:PDE_P}
    \frac{\partial }{\partial t} \mathcal{P}[u] (t,x) &= \mathcal{P} \mathcal{L}[u] (t,x) = \mathcal{P} \mathcal{L} \mathcal{P} [u] (t,x) + \mathcal{P} \mathcal{L} \mathcal{Q} [u] (t,x)  \\
    \label{eq_MZ:PDE_Q}
     \frac{\partial }{\partial t} \mathcal{Q}[u] (t,x) &= \mathcal{Q} \mathcal{L}[u] (t,x) = \mathcal{Q} \mathcal{L} \mathcal{P} [u] (t,x) + \mathcal{Q} \mathcal{L} \mathcal{Q} [u]  (t,x)
\end{align}
Solving for \ref{eq_MZ:PDE_Q} yields $
    \mathcal{Q}[u](t,x) = \int_0^t \exp{\mathcal{Q}\mathcal{L}(t-s)} \mathcal{Q}\mathcal{L} \mathcal{P}[u](s,x) ds + e^{\mathcal{Q} \mathcal{L} t} \mathcal{Q}[u_0](t,x).$

Plugging into \ref{eq_MZ:PDE_P}, we obtain a \emph{Generalized Langevin Equation} \citep{mori1965transport} for $\mathcal{P}[u]$:

\begin{align}
    \label{eq_MZ:GLE}
    \frac{\partial }{\partial t} \mathcal{P}[u] (t,x) & = \mathcal{P} \mathcal{L} \mathcal{P} [u] (t,x) + \mathcal{P}\mathcal{L} \int_0^t \exp{\mathcal{Q}\mathcal{L}(t-s)} \mathcal{Q}\mathcal{L} \mathcal{P}[u](s,x) ds + \mathcal{P} \mathcal{L} e^{\mathcal{Q} \mathcal{L} t} \mathcal{Q}[u_0](t,x)
\end{align}

We will refer to the first summand on the right hand side of \eqref{eq_MZ:GLE} as the {\bf Markovian} term because it only depends on $\mathcal{P}[u](t,x)$, the second summand as the {\bf memory} term because it depends on $\mathcal{P}[u](s,x)$ for $0\leq s \leq t$, and the third summand as the {\bf unobserved residual} as it depends on $\mathcal{Q}[u_0]$ which is never observed.

Since \eqref{eq_MZ:GLE} is exact, it is equivalent to solving the full system. The term that is typically most difficult to compute is the memory term, and many methods to approximate it have been proposed.

In the \emph{physics literature}, some techniques include a perturbation expansion of the exponential $\exp{\mathcal{Q}\mathcal{L}(t-s)}$ \citep{nakajima_derivation-Breuer2002}, or approximations using operators defined in  $\mathcal{P}\left[L ^2\left(\Omega; \mathbb{R}^V\right)\right]$ \citep{shigeva1-shi2003new,shigeva2-zhang2006nonequilibrium,shigeva3-10.1063/1.4948408,shigeva4-10.1063/1.4948612}. 
In the \emph{classical numerical PDE solver} literature, the memory term has been approximated by leveraging the structure of the orthogonal dynamics of the $\mathcal{P}$ semi-group ~\citep{NZ_gouasmi2017apriori}, and the Mori-Zwanzig formalism has been applied to a variety of fluid dynamics PDEs ~\citep{MZ_gluid_dynamics_parish}.
\RB{In the \emph{machine learning literature}, some works approximate the memory term with a neural network, which is then used as a part of a hybrid PDE solver \citep{moriks-Ma2018-up, fluid_closure:BECK2019108910, closure_data_driven:doi:10.1137/18M1177263}. \citet{neural_closure:Gupta2021} approximated both the Markovian and memory term with a neural network, yet the method required deriving and coding adjoint equations to perform backpropagation.
In this work, we explain \textit{when} modeling memory is expected to be helpful, and introduce a neural operator that learns to model the temporal (e.g. memory) and spatial dynamics of a PDE directly from data.
}

\section{Theoretical Motivation for Memory: a Simple Example}
\label{s:theory}

In this section, we provide a simple, but natural example of a (linear) PDE, along with (in the nomenclature of Section~\ref{sec:MZ}) a natural projection operator given by a \emph{Fourier truncation measurement operator}, such that the memory term in the generalized Langevin equation (GLE) can have an arbitrarily large impact on the quality of the calculated solution. 
We will work with periodic functions over $[0,2\pi]$ which have a convenient basis: 
\begin{defi}[Basis for 2$\pi$-periodic functions] A function $f: \mathbb{R} \to \mathbb{R}$ is $2\pi$-periodic if $f(x + 2\pi) = f(x)$. We can identify $2\pi$-periodic functions with functions over the torus $T:=\{e^{i\theta}: \theta \in \mathbb{R}\} \subseteq \mathbb{C}$ by the map $\tilde{f}(e^{ix}) = f(x)$. Note that $\{e^{i x n}\}_{n \in \mathbb{Z}}$ is a basis for the set of $2\pi$-periodic functions.
\end{defi}

We will define the following measurement operator: 
\begin{defi}[Fourier truncation measurement] 
    \label{def:fourier_truncation}
  The operator $\mathcal{P}_k: L^2(T;\mathbb{R}) \to L^2(T;\mathbb{R})$ acts on $f \in L^2(T;\mathbb{R})$, $f(x) = \sum_{n=-\infty}^{\infty} a_n e^{i n x}$ as  $\mathcal{P}_k(f) = \sum_{n=-k}^{k} a_n e^{i n x}.$
\end{defi}
For notational convenience, we will also define the functions $\{\mathbf{e}_n\}_{n \in \mathbb{Z}}$, where  $\mathbf{e}_n(x) := e^{-inx} + e^{inx}$. 
Now, we consider the following operator to define a linear time-dependent PDE: 
\begin{prop}
\label{p:sho}
Let $\mathcal{L}: L^2(T;\mathbb{R}) \to L^2(T;\mathbb{R})$ be defined as 
$\mathcal{L} u(x) = -\Delta u(x) + B \cdot (e^{-ix} + e^{ix}) u(x)$
for $B > 0$. Then, we have: 
\begin{align*}
\forall 1 \leq n \in \mathbb{N}, \hspace{5pt} \mathcal L (\mathbf{e}_n) &= n^2 \mathbf{e}_n + B (\mathbf{e}_{n-1} + \mathbf{e}_{n+1}) \quad \& \quad
\mathcal L (\mathbf{e}_0) = 2 B \mathbf{e}_{1} 
\end{align*}

\end{prop}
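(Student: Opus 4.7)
The plan is to verify the identity by direct computation, treating the two summands of $\mathcal{L}$ separately and then recombining. Everything reduces to elementary manipulations of complex exponentials.

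First, I would handle the Laplacian part. Since $\mathbf{e}_n(x) = e^{-inx} + e^{inx}$ and $\partial_x^2 e^{\pm i n x} = -n^2 e^{\pm i n x}$, the linearity of differentiation immediately gives $-\Delta \mathbf{e}_n = n^2 \mathbf{e}_n$ for all $n \in \mathbb{N}$ (including $n = 0$, where both sides vanish).

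Next, I would compute the multiplicative part $B(e^{-ix} + e^{ix}) \mathbf{e}_n(x)$ by expanding the product of two binomials:
\begin{align*}
(e^{-ix} + e^{ix})(e^{-inx} + e^{inx}) = e^{-i(n+1)x} + e^{i(n+1)x} + e^{-i(n-1)x} + e^{i(n-1)x}.
\end{align*}
For $n \geq 1$, the first pair is $\mathbf{e}_{n+1}$, and the second pair is $\mathbf{e}_{n-1}$ (where for $n = 1$ the second pair collapses to $1 + 1 = \mathbf{e}_0$, consistent with the definition). Adding the Laplacian contribution $n^2 \mathbf{e}_n$ gives exactly $\mathcal{L}(\mathbf{e}_n) = n^2 \mathbf{e}_n + B(\mathbf{e}_{n-1} + \mathbf{e}_{n+1})$.

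For the boundary case $n = 0$, I would note that $\mathbf{e}_0 \equiv 2$ is constant, so $-\Delta \mathbf{e}_0 = 0$ and $B(e^{-ix} + e^{ix}) \cdot 2 = 2B \,\mathbf{e}_1$, yielding $\mathcal{L}(\mathbf{e}_0) = 2B\,\mathbf{e}_1$ as claimed. There is no real obstacle in this proof beyond the small bookkeeping subtlety that $\mathbf{e}_0$ has a factor of $2$ built in (since the two exponentials coincide at $n = 0$), which is exactly what produces the $2B$ rather than $B$ in the second identity.
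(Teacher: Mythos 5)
Your computation is correct and is the direct elementary verification one would expect; the paper states Proposition~\ref{p:sho} without an explicit proof (it is treated as immediate), so your approach of computing $-\Delta \mathbf{e}_n$ and the product $(e^{-ix}+e^{ix})\mathbf{e}_n$ separately and recombining matches the intended argument. The handling of the $n=0$ case, where $\mathbf{e}_0 \equiv 2$ produces the factor $2B$, is the only place requiring care, and you have addressed it correctly.
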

The crucial property of this operator is that it acts by ``mixing'' the $n$-th Fourier basis  with the $(n-1)$-th and $(n+1)$-th: thus information is propagated to both the higher and lower-order part of the spectrum. 
Given the above proposition, we can easily write down the evolution of a PDE with operator $\mathcal{L}$ in the basis $\{\mathbf{e}_n\}_{n \in \mathbb{Z}}$: 
 \begin{prop}[]
 \label{p:mzs}
 Let $\mathcal{L}$ be defined as in Proposition~\ref{p:sho}. Consider the PDE 
\begin{align*}
    \frac{\partial}{\partial t} u(t,x) &= \mathcal{L} u(t,x) \\ 
    u(0,x) &= \sum_{n \in \mathbb{N}_0} a_n(0) \mathbf{e}_n 
\end{align*}
Let $u(t,x) = \sum_{n \in \mathbb{N}_0} a_n^{(t)} \mathbf{e}_n$.
Then, the coefficients $a_n^{(t)}$ satisfy: 
\begin{align}
\forall 1 \leq n \in \mathbb{N}, \hspace{5pt} \frac{\partial}{\partial t} a_{n}^{(t)}&=n^2 a_n^{(t)} + B \left( a_{n-1}^{(t)} + a_{n+1}^{(t)}\right)\\
\frac{\partial}{\partial t} a_{0}^{(t)}&= 2B a_{1}^{(t)}
\end{align}

\end{prop}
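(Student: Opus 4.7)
The plan is to exploit the linearity of $\mathcal{L}$ together with the explicit action on the basis $\{\mathbf{e}_n\}_{n \geq 0}$ computed in Proposition~\ref{p:sho}. Concretely, I would plug the ansatz $u(t,x) = \sum_{n \in \mathbb{N}_0} a_n^{(t)} \mathbf{e}_n(x)$ into both sides of the PDE $\partial_t u = \mathcal{L} u$, and then equate coefficients in the $\{\mathbf{e}_n\}$ expansion. Differentiating the ansatz in $t$ term by term (justified for sufficiently regular solutions, e.g.\ initial data with rapidly decaying Fourier coefficients) gives
$$\frac{\partial u}{\partial t}(t,x) = \sum_{n \in \mathbb{N}_0} \frac{\partial a_n^{(t)}}{\partial t}\, \mathbf{e}_n(x).$$

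For the right-hand side, linearity of $\mathcal{L}$ combined with Proposition~\ref{p:sho} immediately yields
$$\mathcal{L} u(t,x) = a_0^{(t)}\, \mathcal{L}(\mathbf{e}_0) + \sum_{n \geq 1} a_n^{(t)}\, \mathcal{L}(\mathbf{e}_n),$$
where each summand is a finite combination of $\mathbf{e}_{n-1}, \mathbf{e}_n, \mathbf{e}_{n+1}$. A useful observation is that $\mathcal{L}$ is tridiagonal in this basis and preserves the subspace of real, even, $2\pi$-periodic functions spanned by $\{\mathbf{e}_n\}_{n \geq 0}$, so the ansatz is self-consistent for all $t \in [0,T]$.

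Next, I would re-index the double sum so that each $\mathbf{e}_m$ appears at most once on the right, grouping together the contributions from $\mathcal{L}(\mathbf{e}_{m-1})$, $\mathcal{L}(\mathbf{e}_m)$, and $\mathcal{L}(\mathbf{e}_{m+1})$. The family $\{\mathbf{e}_n\}_{n \geq 0}$ is $\mathbb{R}$-linearly independent, since it inherits this property from the standard complex Fourier basis $\{e^{inx}\}_{n \in \mathbb{Z}}$; hence matching coefficients on both sides of $\partial_t u = \mathcal{L} u$ is valid and yields the claimed system of ODEs for $\{a_n^{(t)}\}_{n \geq 0}$.

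The argument is essentially bookkeeping; no substantive obstacle arises beyond carefully handling the boundary index $n=0$. The $n=0$ case is slightly special because $\mathcal{L}(\mathbf{e}_0)$ has no diagonal $n^2 \mathbf{e}_0$ term (since $-\Delta \mathbf{e}_0 = 0$) and no ``$\mathbf{e}_{-1}$'' contribution, which is why its evolution equation differs in form from that for $n \geq 1$. Once this boundary is treated with care, the system of ODEs drops out directly from Proposition~\ref{p:sho}.
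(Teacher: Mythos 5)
Your plan — differentiate the ansatz termwise, apply Proposition~\ref{p:sho} by linearity, re-index, and match coefficients in the $\{\mathbf{e}_n\}_{n\geq 0}$ basis — is exactly the argument the paper intends (it supplies no separate proof because the proposition is meant to be an immediate corollary of p:sho). The structural remarks about linear independence of the $\mathbf{e}_n$ and about $\mathcal{L}$ preserving the even subspace are correct.

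However, you explicitly flag the boundary index $n=0$ as ``slightly special'' without carrying out the calculation there, and this is precisely where the outline does not close. Writing $L_{mn}$ for the coefficient of $\mathbf{e}_m$ in $\mathcal{L}(\mathbf{e}_n)$, Proposition~\ref{p:sho} gives $L_{1,0}=2B$ (from $\mathcal{L}(\mathbf{e}_0)=2B\mathbf{e}_1$) and $L_{0,1}=B$ (the $\mathbf{e}_0$ term in $\mathcal{L}(\mathbf{e}_1)=\mathbf{e}_1 + B(\mathbf{e}_0+\mathbf{e}_2)$). Matching coefficients then yields
\begin{align*}
\frac{\partial a_0^{(t)}}{\partial t} &= L_{0,1}\,a_1^{(t)} = B\,a_1^{(t)},\\
\frac{\partial a_1^{(t)}}{\partial t} &= L_{1,0}\,a_0^{(t)} + L_{1,1}\,a_1^{(t)} + L_{1,2}\,a_2^{(t)} = 2B\,a_0^{(t)} + a_1^{(t)} + B\,a_2^{(t)},
\end{align*}
whereas the stated proposition has the $B$ and $2B$ swapped in these two equations. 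This is an inconsistency in the paper itself between p:sho and p:mzs (the top-left $2\times 2$ block of the matrix got transposed; the paper then carries the p:mzs form $\begin{pmatrix}0&2B\\ B&1\end{pmatrix}$ into the proof of Theorem~\ref{theorem}, where the transposition is harmless because the matrix and its transpose are similar via $\operatorname{diag}(1,\sqrt{2})$, so the growth-rate estimates survive). The point for you: your method is the right one and is what the paper does, but if you actually run the $n\in\{0,1\}$ bookkeeping, you will not reproduce the displayed ODEs as written — you will instead uncover that either p:sho's action at $n\in\{0,1\}$ or p:mzs's coefficients must be corrected. A careful version of your proof should carry out those two cases explicitly rather than defer them.
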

With this setup in mind, we will show that as $B$ grows, the memory term in \eqref{eq_MZ:GLE} can have an arbitrarily large effect on the calculated solution:  
\begin{theorem}[Effect of memory]
    \label{theorem}
   Consider the operator $\mathcal{L}$ defined in Proposition \ref{p:sho}, the Fourier truncation operator $\mathcal{P}_1$, and let $\mathcal{Q} = I - \mathcal{P}_1$. Let $u(0, x)$ have the form in Proposition~\ref{p:mzs} for $B > 0$ sufficiently large, and let $a^{(0)}_n > 0, \forall n > 0.$ Consider the memoryless and memory-augmented PDEs: 
\begin{align}
    \frac{\partial u_1}{\partial t} &= \mathcal{P}_1 \mathcal{L} u_1 \label{eq:nomem}\\
    \frac{\partial u_2}{\partial t} &= \mathcal{P}_1 \mathcal{L} u_2 + \mathcal{P}_1\mathcal{L} \int_0^t \exp{\mathcal{Q}\mathcal{L}(t-s)} \mathcal{Q}\mathcal{L} u_2(s) ds \label{eq:wmem}
\end{align}
with $u_1(0,x) = u_2(0,x) = \mathcal{P}_1 u(0,x)$. Then, $u_1$ and $u_2$ satisfy: 
   \begin{align}
       \forall t > 0, \|u_1(t) - u_2(t)\|_{L_2} &\gtrsim  B t \|u_1(t)\|_{L_2} \\ 
       \forall t > 0, \|u_1(t) - u_2(t)\|_{L_2} &\gtrsim B t\exp(\sqrt{2} Bt)
   \end{align}
\end{theorem}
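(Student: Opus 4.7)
The plan is to reduce the problem to an ODE computation in the Fourier basis $\{\mathbf{e}_n\}$ and then bound the difference via Duhamel's principle. By Proposition~\ref{p:sho}, $\mathcal{L}$ acts tridiagonally in this basis, and since $\mathcal{P}_1$ projects onto $\mathrm{span}(\mathbf{e}_0, \mathbf{e}_1)$, both $u_1$ and $u_2$ stay in this span for all $t$. Writing $u_i(t) = c_0^{(i)}(t) \mathbf{e}_0 + c_1^{(i)}(t) \mathbf{e}_1$, \eqref{eq:nomem} reduces to $\dot c^{(1)} = M_1 c^{(1)}$ with
\[
M_1 = \begin{pmatrix} 0 & B \\ 2B & 1 \end{pmatrix},
\]
whose eigenvalues $\lambda_\pm = \tfrac{1 \pm \sqrt{1+8B^2}}{2}$ satisfy $\lambda_+ \geq \sqrt{2}\,B$ and $\lambda_+ = \sqrt{2}B + O(1)$ for large $B$. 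An explicit eigenvector decomposition using the (non-negative) initial data yields $\|u_1(t)\|_{L^2} \asymp e^{\lambda_+ t}$ as well as the one-sided bound $c_1^{(1)}(s) \gtrsim e^{\lambda_+ s}$.

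Next I would unravel the memory term in \eqref{eq:wmem}. The key observation is that $\mathcal{Q}\mathcal{L} u_2(s) = B\, c_1^{(2)}(s)\,\mathbf{e}_2$: indeed $\mathcal{L}(\mathbf{e}_0) = 2B\mathbf{e}_1$ and $\mathcal{L}(\mathbf{e}_1) = \mathbf{e}_1 + B(\mathbf{e}_0 + \mathbf{e}_2)$, and all contributions in $\mathbf{e}_0, \mathbf{e}_1$ are killed by $\mathcal{Q}$. The operator $\mathcal{Q}\mathcal{L}$ restricted to $\mathrm{range}(\mathcal{Q}) = \mathrm{span}\{\mathbf{e}_n : n \geq 2\}$ is represented by the tridiagonal entrywise non-negative matrix $A$ with $A_{nn} = n^2$ and $A_{n,n\pm 1} = B$ (the sole exception being that the $\mathbf{e}_2 \to \mathbf{e}_1$ transition is absent because $\mathbf{e}_1 \notin \mathrm{range}(\mathcal{Q})$). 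Since $\mathcal{P}_1 \mathcal{L}(\mathbf{e}_n) = 0$ for $n \geq 3$ while $\mathcal{P}_1 \mathcal{L}(\mathbf{e}_2) = B\, \mathbf{e}_1$, only the $(2,2)$-entry of $\exp(A(t-s))$ contributes to the memory forcing, so $u_2$ satisfies $\dot c^{(2)} = M_1 c^{(2)} + g(t)\,\mathbf{e}_1$ with
\[
g(t) = B^2 \int_0^t c_1^{(2)}(s)\,\bigl[\exp(A(t-s))\bigr]_{22}\,ds.
\]

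From here, I would establish lower bounds on $g$ and propagate them via Duhamel. Entrywise non-negativity of $M_1$ and $A$, together with the non-negative initial data, shows that all Fourier coefficients stay non-negative, $g(t) \geq 0$, and a direct comparison gives $c_1^{(2)}(s) \geq c_1^{(1)}(s)$. The series expansion of $\exp(A\tau)$ gives $[\exp(A\tau)]_{22} \geq 1$ (all terms are non-negative and the $k=0$ term is $1$), hence $g(t) \gtrsim B^2 \int_0^t e^{\lambda_+ s}\,ds \gtrsim B\,e^{\lambda_+ t}$ for large $B$. The difference $\Delta c := c^{(2)} - c^{(1)}$ satisfies $\dot{\Delta c} = M_1\, \Delta c + g(t)\,\mathbf{e}_1$ with $\Delta c(0) = 0$, so Duhamel's formula gives $\Delta c(t) = \int_0^t e^{M_1(t-s)}\,g(s)\,\mathbf{e}_1\,ds$. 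Since $e^{M_1 \tau}$ is entrywise non-negative and the eigenvector decomposition yields $\|e^{M_1 \tau} \mathbf{e}_1\| \gtrsim e^{\lambda_+ \tau}$, combining the two estimates gives
\[
\|\Delta c(t)\| \;\gtrsim\; \int_0^t e^{\lambda_+(t-s)} \cdot B\,e^{\lambda_+ s}\,ds \;=\; B t\,e^{\lambda_+ t} \;\gtrsim\; B t\,e^{\sqrt{2}B t},
\]
which is the second bound; dividing by $\|u_1(t)\|_{L^2} \lesssim e^{\lambda_+ t}$ gives the first.

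The main obstacle is the bookkeeping through the two cascaded Duhamel-style integrations (first inside $g(t)$, then in $\Delta c$): ensuring neither step loses an inverse factor of $B$ requires using the \emph{top} eigenvalue $\lambda_+$ of $M_1$ rather than a crude operator norm, and exploiting entrywise non-negativity to lower-bound $\ell^2$-norms by suitable $\ell^1$-norms of non-negative vectors. The requirement ``$B$ sufficiently large'' is precisely what absorbs the $O(1)$ correction from $\lambda_+ = \sqrt{2}B + O(1)$ into the final $\gtrsim$, and also ensures the dominant eigenvector component $\alpha_+$ of the initial data is strictly positive regardless of the sign of $a_0^{(0)}$.
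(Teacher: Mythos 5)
Your proof is correct and follows the same overall skeleton as the paper's (reduce to ODEs in the $\{\mathbf e_0,\mathbf e_1\}$ basis, observe $\mathcal Q\mathcal L u_2(s) = B\,c_1^{(2)}(s)\,\mathbf e_2$, exploit entrywise non-negativity, then track the extra forcing on $\mathbf e_1$), but two of your intermediate steps are genuinely cleaner than what the paper does. First, you lower-bound the propagator contribution by the elementary power-series observation $[\exp(A\tau)]_{22}\ge 1$, which is enough once you also notice that only the $(2,2)$-entry survives $\mathcal P_1\mathcal L$ (because $\mathcal P_1\mathcal L(\mathbf e_n)=0$ for $n\ge 3$); the paper instead truncates the orthogonal dynamics to a $3\times 3$ tridiagonal block and proves a separate lower-bound lemma $[\exp(A_3)]_{ij}\ge \tfrac1{10}e^{\sqrt 2 B}$. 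You therefore avoid that lemma entirely, at the cost of sourcing the final $Bt$ factor from the Duhamel convolution against $e^{M_1(t-s)}$ rather than from the inner memory integral, which you handle with an explicit Duhamel representation of $\Delta c = c^{(2)}-c^{(1)}$; the paper instead lower-bounds $\dot{\hat a}_1$ directly and integrates, and then has to compare a lower bound on $\hat a_1$ with a separate upper bound on $a_1$, which is slightly more awkward. Second, your Markovian matrix $M_1=\begin{pmatrix}0&B\\2B&1\end{pmatrix}$ is the correct one given Proposition~\ref{p:sho} (the factor $2B$ arises from $\mathcal L(\mathbf e_0)=2B\mathbf e_1$ and therefore sits in the $\dot c_1$ equation, not $\dot c_0$); the paper writes the transpose, which has the same eigenvalues and hence the same conclusion, but your version is the accurate one. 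Two small caveats that apply to both arguments: the non-negativity of the $c^{(2)}$ coefficients and of $g$ must be established together (the memory term makes the system non-local in time, so the usual Metzler argument needs the bootstrap/contradiction variant the paper uses rather than a one-line appeal to comparison), and the final bound $\|\Delta c(t)\|\gtrsim Bt\,e^{\sqrt 2 Bt}$ kicks in only once $Bt$ is bounded below by an absolute constant, which both you and the paper implicitly absorb into ``$B$ sufficiently large'' and ``$\gtrsim$''.
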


\begin{remark}
    Note that the two conclusions of the theorem mean that both the absolute difference, and the relative difference between the PDE including the memory term \eqref{eq:wmem} and not including the memory term \eqref{eq:nomem} can be arbitrarily large as $B, t \to \infty$.  
\end{remark}

\begin{remark}
    The choice of $\mathcal{L}$ is made for ease of calculation of the Markovian and memory term. Conceptually, we expect the solution to \eqref{eq:wmem} will differ a lot from the solution to \eqref{eq:nomem} if the action of the operator $\mathcal{L}$ tends to ``mix'' components in the span of  $\mathcal{P}$ and the span of $\mathcal{Q}$.
\end{remark}

\begin{remark}
If we solve the equation $\frac{\partial}{\partial t} u(t,x) = \mathcal{L} u(t,x)$ \emph{exactly}, we can calculate that $\|u(t)\|_{L_2}$ will be on the order of $\exp(2Bt)$. This can be seen by writing the evolution of the coefficients of $u(t)$ in the basis $\{\mathbf{e}_n\}$, which looks like: 
$ \frac{\partial}{\partial t} \begin{pmatrix}
    a_0 \\ a_1 \\ \dots 
\end{pmatrix} = \mathcal{O} \begin{pmatrix}
    a_0 \\ a_1 \\ \dots 
\end{pmatrix}$ 
where $\mathcal{O}$ is roughly a tridiagonal Toeplitz operator $\mathcal{O} =  \begin{pmatrix}
     &  \vdots & \vdots & \vdots &  \vdots &  \\ 
    \dots & B & n^2 & B & 0 & \dots \\ 
    \dots & 0 & B & (n+1)^2 & B & \dots  \\ 
     &  \vdots & \vdots & \vdots &  \vdots &  
\end{pmatrix}.$
The largest eigenvalue of this operator can be shown to be on the order of at least $2B$ (Equation 4 in \cite{noschese2013tridiagonal}). The Markovian term results in a solution of order $\exp(\sqrt{2}Bt)$ (~\eqref{eq:firstnom} and ~\eqref{eq:secondnom}), which is multiplicatively smaller by a factor of $\exp(\left(2 - \sqrt{2}\right)Bt)$. The result in this Theorem shows that the memory-based PDE \eqref{eq:wmem} results in a multiplicative ``first order'' correction which can be seen by Taylor expanding $\exp(\sqrt{2}Bt) \approx 1 + \sqrt{2}Bt + \frac{1}{2}(\sqrt{2}B)^2t^2 + \dots$.     
\end{remark}
\AGR{there is a lot of inconsistent formatting here, particularly around the equations. I don't like this style of eqref so I would just do "equation~\\ref" consistently. there's also a hard coded (4) here instead of a reference which is concerning...}

\section{Experimental Setup}

\subsection{Dataset Generation}
\label{sec:dataset_selection}

\paragraph{PDEs with high-frequency Fourier modes:} From the expression for the memory term in  \eqref{eq_MZ:GLE} and the presence of high-frequency terms in the solution of the PDE of Theorem \ref{theorem}, we should intuitively expect that memory will be most useful when the PDE solutions contain significant contributions from high-frequency Fourier modes\footnote{Note, this is meant to be an intuitive rule-of-thumb rather than a formal statement. In general, the ``observation'' operator and the PDE will interact in complicated ways, but the combination of low-resolution grids and examining high-frequency components in the Fourier basis seems to be very predictive in practice.}. Nevertheless, current benchmarks like PDEBench \citep{pdebench-Takamoto2022-ka} rarely contain PDEs whose solutions have substantial high-frequency components, as we quantitatively show in Appendix \ref{sec:omega_datasets}. A solution which predominantly contains low-frequency Fourier modes can be accurately approximated by its Fourier truncation (Definition \ref{def:fourier_truncation}), so it can be represented by a finite-dimensional space, which implies that the unobserved part of the solution ($\mathcal{Q}[u]$ in the notation of Section \ref{sec:MZ}) should be small. 

Therefore, we construct a new benchmark
dataset which is specifically designed to contain PDEs in which the high-frequency Fourier modes have substantial contribution. 
Specifically, we generate a benchmark from solutions to the Kuramoto-Sivashinsky equation with low viscosity (Section \ref{sec:KS}). In the case of Navier-Stokes (Section \ref{sec:NavierStokes}) and Burgers' equation (Section \ref{appendix:burgers_experiment}), we directly take datasets from previous works. Details on data generation procedure are provided in Appendix \ref{appendix:data_generation}.

\paragraph{Datasets with different resolutions:} 
To construct our datasets, we first take discretized trajectories of a PDE on a \emph{high resolution} discretized spatial grid $\mathcal{S}^{HR}\subset \mathbb{R}^d$, i.e. $u(t)\in \mathbb{R}^{|\mathcal{S}^{HR}|}$. 
We then produce datasets that consist of \emph{lower resolution} versions of the above trajectories, i.e. on a grid $\mathcal{S}^{LR}$ of lower resolution $f$, and show the performance of models that were trained and tested at such resolution. 
For 1-dimensional datasets, the discretized trajectory on $\mathcal{S}^{LR}$ is obtained by cubic interpolation of the trajectory in the highest resolution grid. In 2D, the discretized trajectory is obtained by downsampling.

\subsection{Training and Evaluation Procedure}
\textbf{Task:} Let $u\in \mathcal{C}\left([0,T];L^2\left(\Omega; \mathbb{R}^V\right)\right)$ be the solution of the PDE given by Definition \ref{def:PDE}. Let $\mathcal{S}$ be an equispaced grid in $\Omega$ with resolution $f$, and let $\mathcal{T}$ be another equispaced grid in $[0,T]$ with $N_t+1$ points. Given $u_0(x)\restrict_{\mathcal{S}}$, our goal is to predict $u(t,x)\restrict_{\mathcal{S}}$ for $t\in \mathcal{T}$ using a neural operator. 

\textbf{Training objective:} As is standard, we proceed by empirical risk minimization on a dataset of trajectories. More specifically, given a loss function $\ell : (\mathbb{R}^{|\mathcal{S}|},\mathbb{R}^{|\mathcal{S}|})\rightarrow\mathbb{R}$, a dataset of training trajectories $\left(u(t,x)^{(i)}\right)_{i=0}^{N}$, and parametrized maps $\mathcal{G}^\Theta_t:\mathbb{R}^{|\mathcal{S}|}\rightarrow\mathbb{R}^{|\mathcal{S}|}$ for $t\in\mathcal{T}$, we optimize:
\begin{align*}
    \Theta^* = \text{argmin}_\Theta \frac{1}{N} \sum_{i=0}^{N-1} \frac{1}{N_t} \sum_{t=1}^{N_t} \ell \left( u(t,x)^{(i)} \restrict_{\mathcal{S}}, \mathcal{G}^\Theta_t\left[ u_0^{(i)}(x) \restrict_{\mathcal{S}}\right] \right)
\end{align*}

\textbf{Training and evaluation metric:} Our training loss and evaluation metric is \emph{normalized Root Mean Squared Error (nRMSE)}: 
\begin{equation*}
    \text{nRMSE}\left(u(t,x) \restrict_{\mathcal{S}}, \hat{u}(t) \right) := \frac{ \Vert u(t,x) \restrict_{\mathcal{S}} - \hat{u}(t) \Vert_2}{\Vert u(t,x) \restrict_{\mathcal{S}} \Vert_2 },
\end{equation*}
where $\parallel \cdot \parallel_2$ is the Euclidean norm in $\mathbb{R}^{|\mathcal{S}|}$. \

Further details on training hyperparameters are given in Appendix \ref{appendix:training_details}.

\del{
\subsection{Training and Evaluation procedure}
\label{sec:problem-formulation}

First, we describe the high-level training scaffolding for our method, namely the way the data is generated, and the loss we use.  

\textbf{Training data:} Let $u\in \mathcal{C}\left([0,T];L^2\left(\Omega; \mathbb{R}^V\right)\right)$ be the solution of the PDE given by Definition \ref{def:PDE}. Let $\mathcal{S}$ be an equispaced grid in $\Omega$ with resolution $f$, and let $\mathcal{T}$ be another equispaced grid in $[0,T]$ with $N_t+1$ points. Given $u_0(x)\restrict_\mathcal{S}$, our goal is to predict $u(t,x)\restrict_\mathcal{S}$ for $t\in \mathcal{T}$ using a Neural Operator. 

\textbf{Training loss:} As it is standard, we proceed through empirical risk minimization on a dataset of trajectories. More specifically, given a loss function $\ell : (\mathbb{R}^{|\mathcal{S}|},\mathbb{R}^{|\mathcal{S}|})\rightarrow\mathbb{R}$, a dataset of training trajectories $\left(u(t,x)^{(i)}\right)_{i=0}^{N}$, and parametrized maps $\mathcal{G}^\Theta_t:\mathbb{R}^{|\mathcal{S}|}\rightarrow\mathbb{R}^{|\mathcal{S}|}$ for $t\in\mathcal{T}$, we define:
\begin{align*}
    \Theta^* = \min_\Theta \frac{1}{N} \sum_{i=0}^{N-1} \frac{1}{N_t} \sum_{t=1}^{N_t} \ell \left( u(t,x) \restrict_\mathcal{S}, \mathcal{G}^\Theta_t\left[ u_0(x) \restrict_\mathcal{S}\right] \right)
\end{align*}
We then aim to find an adequate architecture choice such that $\mathcal{G}^{\Theta^*}$ has low test error on unseen trajectories of the same PDE.
}

\subsection{Architecture Framework: Memory Neural Operator}
\label{sec:MemNO}
\begin{figure}
    \centering
    \includegraphics[width=0.85\linewidth]{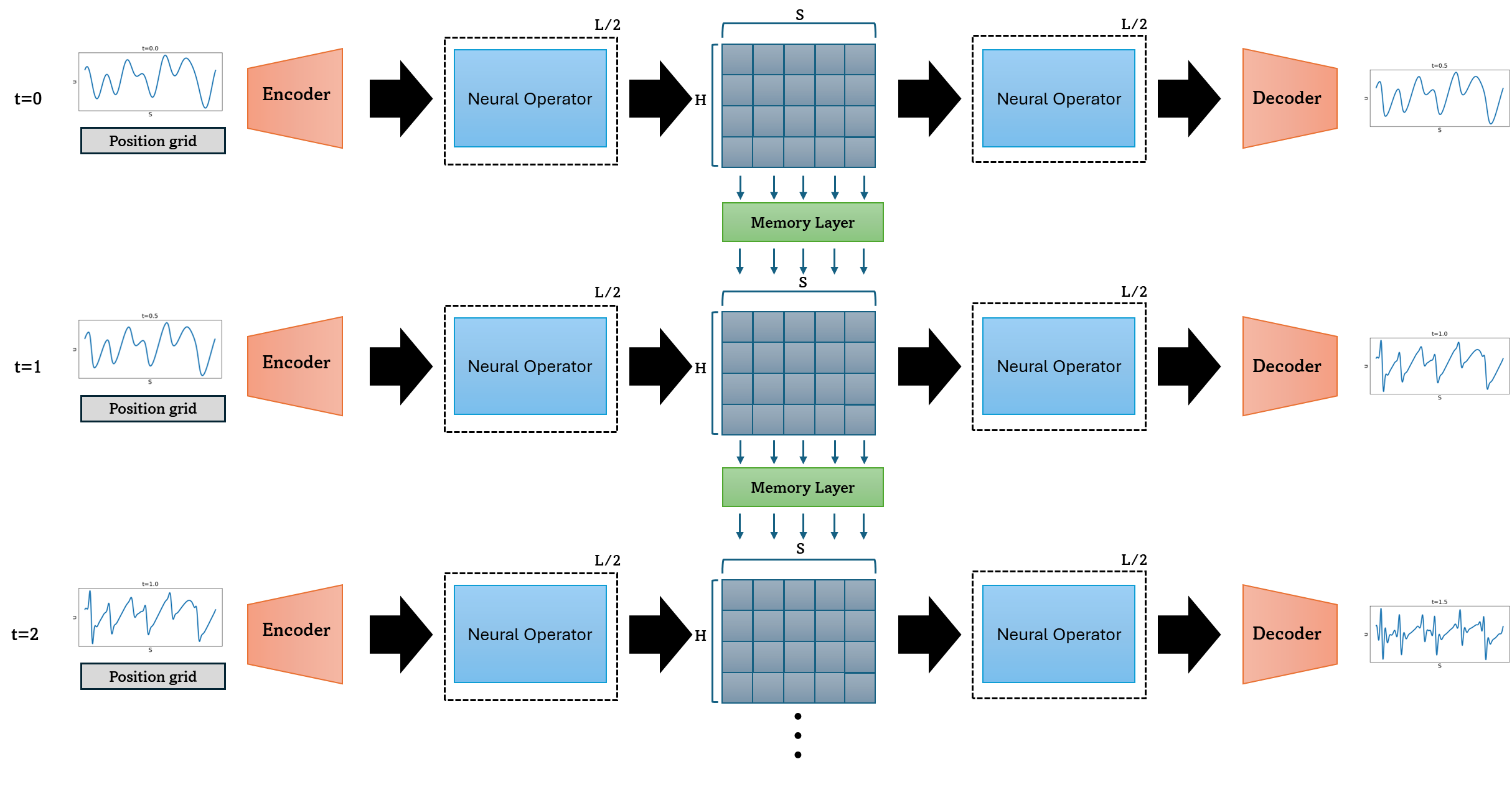}
    \caption{\RB{Diagram of the MemNO framework in 1D (Section \ref{sec:MemNO}). $S$ denotes spatial dimension, $H$ denotes hidden dimension, and $L$ number of layers. The memory layer in inserted in the middle of the spatial layers, although the framework works with other configurations (see Appendix \ref{appendix:memory_layer_configurations}).}}
    \label{fig:diagraml}
\end{figure}
In this section we describe Memory Neural Operator (MemNO), a deep learning framework to incorporate memory into neural operators. \RB{A diagram is provided in Figure \ref{fig:diagraml} and pseudocode in Figure \ref{fig:memno_pseudocode}.}

Let $\text{NO}^\Theta_t$ be a neural operator with $L$ layers,
and denote $\text{NO}^\Theta_t[u_0]$ the prediction of the solution of the PDE at time $t$.
We will assume that this Neural Operator follows the Markovian assumption, i.e. we can write:
\begin{equation}
    \label{eq:MarkovianNO}
    \text{NO}^\Theta_{t_{i+1}}[u_0] = r_{\text{out}} \circ \ell_L \circ \ell_{L-1} \circ ... \circ \ell_0 \circ r_{\text{in}}[\text{NO}^\Theta_{t_i}[u_0]],
\end{equation}
where $r_{\text{in}}: \mathbb{R}^{|\mathcal{S}|}\rightarrow \mathbb{R}^{|\mathcal{S}| \times h_0}$ is the encoder and $r_{\text{out}}: \mathbb{R}^{|\mathcal{S}|\times h_{L+1}}\rightarrow \mathbb{R}^{|\mathcal{S}|}$ is the decoder; $\ell_j: \mathbb{R}^{|\mathcal{S}| \times h_j}\rightarrow\mathbb{R}^{|\mathcal{S}|\times h_{j+1}}$ are parametrized layers; and $h_j$ is the dimension of the $j$-th hidden layer. Essentially, the solution for each new timestep is obtained by applying the \textit{same} $L$ layers to the immediately previous predicted timestep.

Our goal is to define a network $\mathcal{G}^\Theta_t$ 
that builds upon $\text{NO}^\Theta_t$ and can incorporate memory. For this,
we
take inspiration from the Mori-Zwanzig formalism summarized in Section \ref{sec:MZ}.
Comparing \eqref{eq:MarkovianNO} with \eqref{eq_MZ:GLE}, 
we identify $\ell_L \circ \ell_{L-1} \circ ... \circ \ell_0$ with the Markov term
which models the spatial dynamics. 
To introduce the memory term, we interleave an additional residual sequential layer $\mathcal{M}$ that acts on hidden representations of the solution at previous timesteps. Concretely, the MemNO architecture can be written as:
$$\mathcal{G}_{t_{i+1}}^\Theta[u_0] = r_{\text{out}} \circ \ell_L \circ ... \circ \ell_{k+1} \circ \mathcal{M}  \circ \ell_{k} \circ ... \circ \ell_0 \circ r_{\text{in}}  \left[\mathcal{G}_{t_{i}}^\Theta[u_0] , \mathcal{G}_{t_{i-1}}^\Theta[u_0] ,..., u_0 \right],$$
where $-1\leq k \leq L$ is a chosen hyperparameter.\footnote{
$k=L$ refers to inserting $M$ after all the $S$ layers, and $k=-1$ refers to inserting $M$ as the first layer. In Appendix~\ref{appendix:memory_layer_configurations}, we show our experiments are not very sensitive to the choice of $k$.
}
For notation, we will refer to $v^{(j)}(t') \in \mathbb{R}^{|S|\times h_j}$ as the hidden representation at the $j$-th layer for a timestep $t'\leq t_i$, and $v^{(j)}(t',x) \in \mathbb{R}^{h_j}$ as the value of such hidden representation at a spatial point $x \in \mathcal{S}$. Then, the spatial $\ell_j$ layers are understood to be applied timestep-wise, i.e. $\ell_{j}\left[v^{(j)}(t_i), ..., v^{(j)}(t_0)\right] \coloneq \left[\ell_j[v^{(j)}(t_i)], ..., \ell_j[v^{(j)}(t_0)]\right]$,
and analogously for $r_{\text{in}}$ and $r_{\text{out}}$. 
Thus, the $\ell_j$ layers still follow the Markovian assumption. The memory is introduced through $\mathcal{M}$, which is a sequential model that uses the history of the previous timesteps to predict the next one. For computational efficiency, we consider a sequential model $\mathcal{M}: \mathbb{R}^{i\times h_k } \longrightarrow \mathbb{R}^{h_k}$ that is applied to each element of the spatial dimension $|\mathcal{S}|$ independently, i.e. for each $x\in \mathcal{S}$, $\left(\mathcal{M}[v^{(k)}(t_i), ..., v^{(k)}(t_0)]\right)(x) \coloneq \mathcal{M}[v^{(k)}(t_i, x), ..., v^{(k)}(t_0,x)]$.\footnote{\RB{We present an analysis on some architecture modifications that model the local spatial structure more explicitly in Appendix \ref{appendix:spatial_inductive bias}.}}
Note that our MemNO framework can be combined with \emph{any}
existing neural operator layer $\ell$, and with any 
(causal) sequential model $\mathcal{M}$.
Thus it provides a modular architecture design framework
which we hope can serve as a useful tool for practitioners.




\subsection{Instantiating the Memory Neural Operator framework: S4FFNO }
For our experiments, we introduce 
S4 Factorized Fourier Neural Operator (S4FFNO), which instantiates the MemNO framework by combining the Factorized Fourier Neural Operator (FFNO)~\citep{ffno} as the Markovian neural operator and S4~\citep{s4-Gu2021-eb} as the sequential layer. We choose S4 models over
recurrent architectures like LSTM~\citep{LSTM-10.1162/neco.1997.9.8.1735}
due to superior performance in modeling long range dependencies~\citep{s4-Gu2021-eb,tay2020long}, ease of training, 
and favorable memory and computational scaling with both state dimension 
and sequence length. An ablation comparing S4 to LSTM and Transformers is provided in Appendix \ref{transformer}.

\AGR{nit: S4-Factorized sounds like S4 is an adjective for factorized, which is odd. Or that S4 is somehow an integral part of the factorization. neither of which is true I think}

\section{Memory Helps in Low-Resolution and Input Noise: a Case Study}
\label{sec:experiments}

\begin{table*}[th!]
\centering
\resizebox{0.85\textwidth}{!}{%

\begin{tabular}{ccccclc}
\toprule
\multirow{4}{*}{Architecture} & \multirow{4}{*}{Uses memory} & \multirow{4}{*}{Resolution}  & \multicolumn{4}{c}{nRMSE $\downarrow$} \\
\cmidrule(lr){4-7}\morecmidrules\cmidrule(lr){4-7}
& & &  \multicolumn{3}{c}{KS} & Burgers' \\
\cmidrule(rl){4-6} \cmidrule(rl){7-7}
& & & $\nu=0.075$ & $\nu=0.1$ & $\nu=0.125$ & $\nu=0.001$ \\
\midrule
Factformer (1D) & \xmark & \multirow{6}{*}{32} &  0.436 &	0.391 &	0.149 & 0.190 \\

GKT & \xmark &   & 0.588 & 0.601 & 0.314 & 0.356 \\
U-Net & \xmark&  & 0.542 & 0.511 & 0.249 & 0.188 \\
FFNO & \xmark&  & 0.500 & 0.446 & 0.187 & 0.207 \\
Multi Input FFNO & \cmark &  & 0.364 & 0.308 & 0.092 & 0.099 \\
S4FFNO (Ours) &\cmark &  & \textbf{0.139} & \textbf{0.108} & \textbf{0.031} & \textbf{0.053} \\
\midrule
Factformer (1D) & \xmark & \multirow{6}{*}{64} & 0.195 &	0.086	 & 0.022
& 0.162
\\

GKT & \xmark &   & 0.401 & 0.120 & 0.016 & 0.349 \\
U-Net & \xmark &  & 0.147 & 0.062 & 0.022 & 0.171 \\
FFNO & \xmark &  & 0.107 & 0.033 & \textbf{0.004} & 0.146 \\
Multi Input FFNO & \cmark &  & 0.108 & 0.046 & 0.005 & 0.054 \\
S4FFNO (Ours) & \cmark &  & \textbf{0.036} & \textbf{0.011} & \textbf{0.004} & \textbf{0.037} \\
\midrule

Factformer (1D) & \xmark & \multirow{6}{*}{128} & 0.058 &	0.030	& 0.017 & 0.117
 \\
GKT & \xmark &   & 0.028 & 0.013 & 0.007 & 0.307 \\
U-Net & \xmark &  & 0.033 & 0.027 & 0.014 & 0.112 \\
FFNO & \xmark &  & \textbf{0.006} & \textbf{0.004 }& \textbf{0.002} & 0.099 \\
Multi Input FFNO & \cmark &  & 0.057 & 0.052 & 0.023 & \textbf{0.028} \\
S4FFNO (Ours) & \cmark &  & 0.008 & 0.005 & 0.003 & 0.030 \\
\bottomrule
\end{tabular}
}
\caption{ \small{nRMSE values at different resolutions for Burgers' and KS with different viscosities.  S4FFNO achieves up to 6x less error than its memoryless counterpart (FFNO) in KS at resolution 32.
The final time of KS is 2.5 seconds and it contains 25 timesteps. The final times of Burgers' is 1.4 seconds and it contains 20 timesteps. For the prediction at time $t$, S4FFNO has access to the (compressed) memory of all previous timesteps, whereas Multi Input FFNO takes as input the previous four timesteps. More details on training are given in Appendix \ref{appendix:training_details}, and on the Burgers' experiment in Appendix \ref{appendix:burgers_experiment}}.}
\label{tab:KS}
\end{table*}

\begin{figure}[htp]
    \centering
    \begin{subfigure}[b]{0.32\textwidth}
        \includegraphics[width=\linewidth]{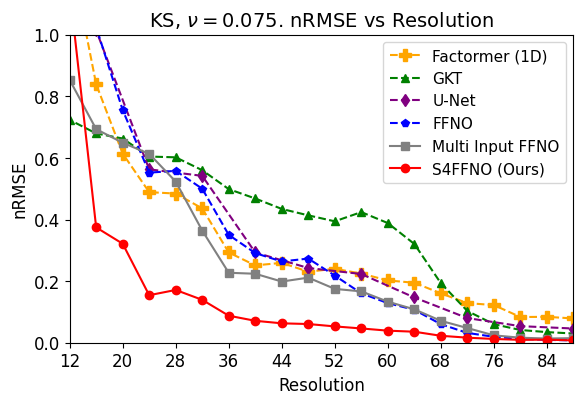}
        \caption{$\nu=0.075$}
        \label{fig:sub1}
    \end{subfigure}
    \hfill
    \begin{subfigure}[b]{0.32\textwidth}
        \includegraphics[width=\linewidth]{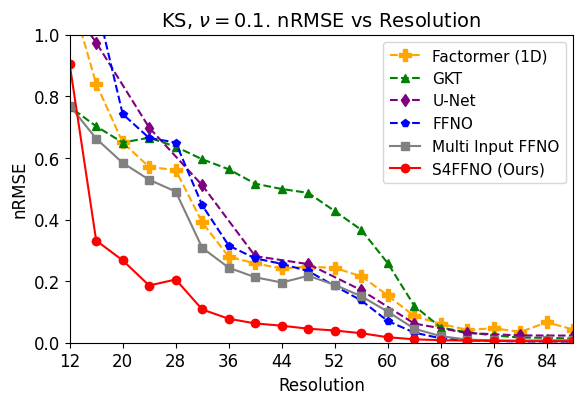}
        \caption{$\nu=0.1$}
        \label{fig:sub2}
    \end{subfigure}
    \hfill
    \begin{subfigure}[b]{0.32\textwidth}
        \includegraphics[width=\linewidth]{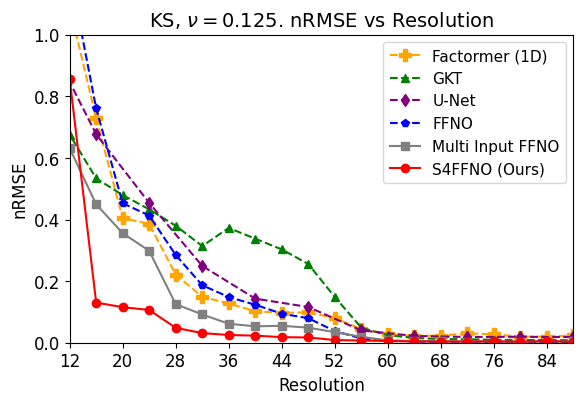}
        \caption{$\nu=0.125$}
        \label{fig:sub3}
    \end{subfigure}
    \begin{subfigure}[b]{0.32\textwidth}
        \includegraphics[width=\linewidth]{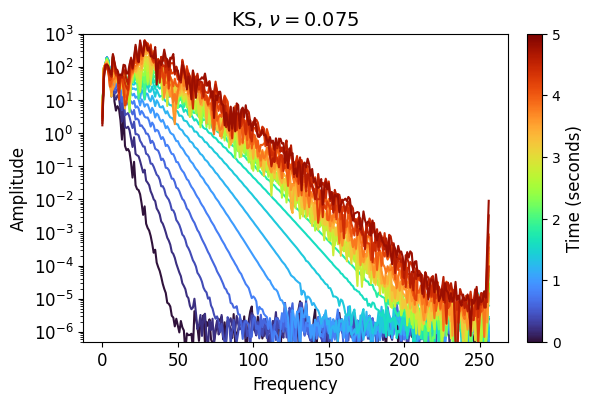}
        \label{fig:sub4}
    \end{subfigure}
    \hfill
    \begin{subfigure}[b]{0.32\textwidth}
        \includegraphics[width=\linewidth]{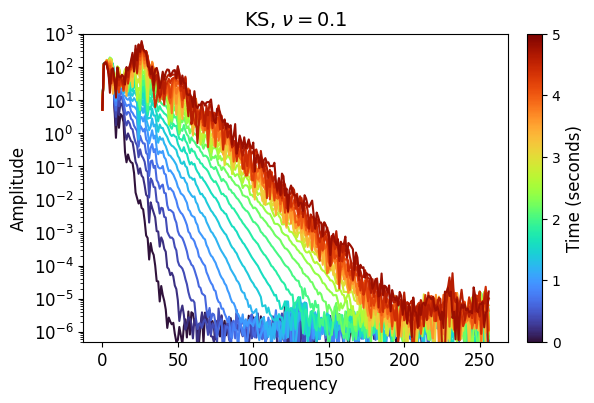}
        \label{fig:sub5}
    \end{subfigure}
    \hfill
    \begin{subfigure}[b]{0.32\textwidth}
        \includegraphics[width=\linewidth]{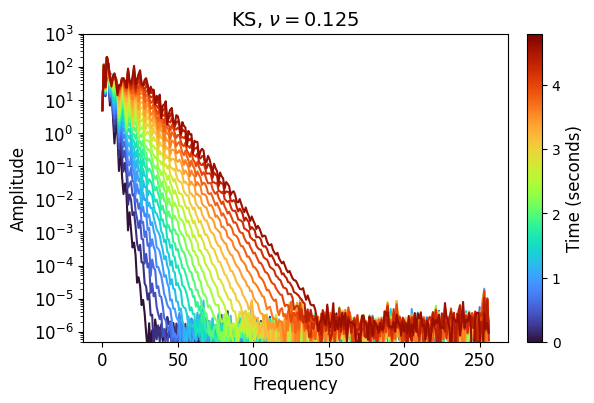}
        \label{fig:sub6}
    \end{subfigure}
    \caption{ \small{ (First row) nRMSE for several models in the KS dataset at different resolutions, where each column is a different viscosity. The final time is $T=2.5s$ and there are $N_t=25$ timesteps}. (Second row) \small{A visualization of the whole frequency spectrum at each of the 25 timesteps for a single trajectory in the dataset. The spectrum is obtained with the ground truth solution at resolution 512.
    }}
    \label{fig:KSmain-viscosities}
\end{figure}

In this section we present a case study for several PDEs of practical interest, showing that neural operators with memory confer accuracy benefits when the data is supplied in low resolution or with observation noise. 
\RB{We will use four Markovian baselines: \textbf{Factformer (1D)} \citep{factformer:li2023scalable}, The Galerkin Transformer \textbf{(GKT)} ~\citep{cao2021choose}, the U-Net Neural Operator \textbf{(U-Net)} ~\citep{unet-brandstetter}, and the Factorized Fourier Neural Operator \textbf{(FFNO)} ~\citep{ffno}. 
}For a memory-augmented baseline, we consider the Multi Input Factorized Fourier Neural Operator \textbf{(Multi input FFNO)}, which takes as input the last 4 timesteps of the solution of the PDE to predict the next one,
as proposed in the original FNO paper ~\citep{Li2021Fourier}, yet using the architectural design of FFNO.
The architectural details for all the 
models are elaborated upon in 
Appendix~\ref{sec:architectural_details}.

\del{
Through our experiments we show 
the difference in the performance between
a baseline ``memoryless'' architecture, which we choose to be Factorized Fourier Neural Operator (FFNO)~\citep{ffno} 
and a memory-augmented architecture using the MemNO framework with S4~\citep{s4-Gu2021-eb} as the sequential model, which we denote as the S4-Factorized Fourier Neural Operator (S4FFNO). 
The architectural details for both the 
architectures are elaborated upon in 
Appendix~\ref{sec:architectural_details}.
}

\del{
\subsection{Setup: Training and evaluation procedure}

[MOVED TO SECTION 5] To construct our datasets, we first produce discretized trajectories of a PDE for $N_t$ timesteps, i.e. $(u(t))_{t=0}^{N_t}$ in a \emph{high resolution} discretized spatial grid $\mathcal{S}^{HR}\subset \mathbb{R}^d$, i.e. $u(t)\in \mathbb{R}^{|\mathcal{S}^{HR}|}$. 
We then produce datasets that consist of \emph{lower resolution} versions of the above trajectories, i.e. on a grid $\mathcal{S}^{LR}$ of lower resolution $f$. 
For 1-dimensional datasets, the discretized trajectory on $\mathcal{S}^{LR}$ is obtained by cubic interpolation of the trajectory in the high resolution grid. In 2D, the discretized trajectory is obtained by downsampling. 
We will show results in different resolutions, in which case both train and test trajectories are at such resolution, and the loss function is also computed at the chosen resolution. 
Our training loss and evaluation metric is \emph{normalized Root Mean Squared Error (nRMSE)}: 
\begin{equation*}
    \text{nRMSE}\left(u(t,x) \restrict_{\mathcal{S}^{LR}}, \hat{u}(t) \right) = \frac{ \Vert u(t,x) \restrict_{\mathcal{S}^{LR}} - \hat{u}(t) \Vert_2}{\Vert u(t,x) \restrict_{\mathcal{S}^{LR}} \Vert_2 },
\end{equation*}
where $\parallel \cdot \parallel_2$ is the euclidean norm in $\mathbb{R}^{|\mathcal{S}^{HR}|}$. More details on training are given in appendix \ref{appendix:training_details}.
}

\subsection{Kuramoto–Sivashinsky Equation (1D): Study in Low-Resolution}
\label{sec:KS}

The Kuramoto-Sivashinsky equation (KS) is a nonlinear PDE that is used as a modeling tool in fluid dynamics, chemical reaction dynamics, and ion interactions. Due to its chaotic behavior it can model instabilities in various physical systems. For viscosity $\nu \in \mathbb{R}_+$, it is written as $
u_t + uu_x + u_{xx} + \nu u_{xxxx} = 0.$
We generated datasets for KS at different viscosities and resolutions, and show the results in Table \ref{tab:KS}.
At resolutions 32 and 64, the memory models (S4FFNO and Multi Input FFNO) outperform the Markovian baselines. In particular, S4FFNO can achieve up to $6\times$ less error than its Markovian counterpart (FFNO) and additionally $3\times$ less error than Multi Input FFNO.

At resolution $128$, FFNO has similar performance compared to S4FFNO, and it outperforms Multi Input FFNO. This is in agreement with other works which propose following the Markovian assumption in neural operators ~\citep{ffno,pderefiner-Lippe2023-cp}, where it is argued that incorporating previous timesteps as input is not necessary and can lead to difficulties in learning, as it seems to happen with Multi Input FFNO. By contrast, S4FFNO effectively models memory when it is useful (resolutions $32$ and $64$) without compromising performance at higher resolutions.

In Figure \ref{fig:KSmain-viscosities} we show the performance of all models across a continuous range of resolutions. It can be seen that there is a ``cutoff'' resolution at which memory models (i.e. S4FFNO) start outperforming Markovian ones (i.e. FFNO) by a large margin. Very importantly, this cutoff resolution depends on the viscosity, being around 76 when $\nu=0.075$, 68 when $\nu=0.1$, and 52 when $\nu=0.125$. In the KS equation, a lower viscosity leads to the appearance of higher frequencies in the Fourier spectrum (see second row of Figure \ref{fig:KSmain-viscosities}), which are not well captured at low resolutions.
Thus, we identify 
the \emph{resolution relative to the Fourier frequency spectrum of the solution} as a key factor for the improved performance of MemNO over memoryless neural operators. We note that even if the initial condition does not contain high frequencies, in the KS equation high frequencies will appear as the system evolves. We provide a similar study on 1D Burgers equation in Appendix~\ref{appendix:burgers_experiment}.

\begin{wrapfigure}[21]{ht}{0.4\textwidth}
    \centering
    \includegraphics[width=\linewidth]{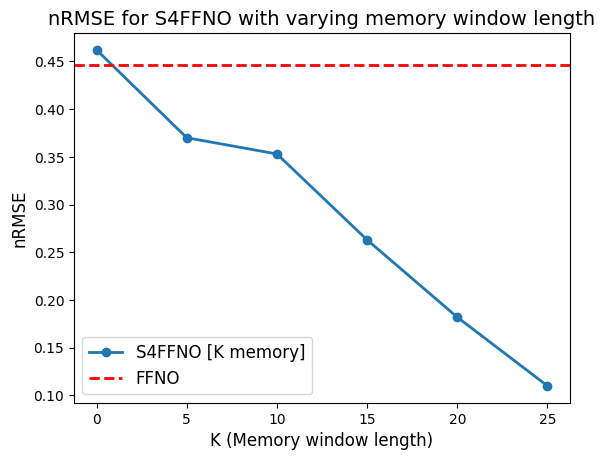}
    \caption{\small{nRMSE for S4FFNO with varying memory window length, for the KS experiment with $\nu=0.1$ and resolution 32. A memory window of $K$ means that the S4 model only has access to the memory of the last $K$ timesteps to predict the next one. At training time, the sequence length is split into chunks of $K$ timesteps and each chunk is trained independently. At inference time, the S4FFNO is given access to the last K predicted timesteps to make the next prediction.}}
    \label{fig:window_length}
\end{wrapfigure}

\RB{
Lastly, there are several architecture choices to model memory that improve performance in low resolution. In particular, in Appendix \ref{transformer} we show that using LSTM instead of S4 also brings similar performance improvements. Likewise, in Appendix \ref{appendix:mem_unet} we show that using S4 as the memory model with U-Net as the Markovian neural operator also outperforms the purely Markovian U-Net.
}

\subsubsection{Ablation: Memory Window Length}
\label{ablation:window_memory_length}

\RB{
To further analyze the behavior of S4FFNO, we present an ablation on the memory window length of the S4FFNO architecture. In Table \ref{tab:KS} and Figure \ref{fig:KSmain-viscosities}, S4FFNO has access to the memory of all previous timesteps in order to predict the solution at timestep $t_i$ (i.e. the S4 model operated on the hidden dimensions $v(t_j)$ for $0 \leq j \leq i-1$). Now, we study what happens when S4 is only fed the last $K$ timesteps, i.e. in order to predict $t_i$, S4 operates on the solution at timesteps $t_{i-K},t_{i-K+1},...,t_i$. The results are shown in Figure \ref{fig:window_length}. 

S4FFNO improves performance as the window length increases, illustrating that the reason for the increased performance of S4FFNO is the capacity to model the memory of past timesteps.
}

\subsubsection{Ablation: FFNO Model Size}
\label{ablation:ffno_model_size}

\RB{
Now we consider what happens when we increase the model size of FFNO. Based on the results of Table \ref{tab:KS}, S4FFNO outperformed FFNO when they had similar compute budgets (see Table \ref{table:params}). However, S4FFNO still outperforms FFNO when FFNO has a much higer compute and parameter budget, as it can be seen in Table \ref{tab:ffno_complexity}. Thus, we conclude that S4FFNO has superior performance due to the possibility of modeling memory from past states, which can not be compensated by increasing the expressivity of the Markovian model.
}

\begin{table*}[th!]
\centering
\resizebox{\textwidth}{!}{%

\begin{tabular}{ccccccc}
\toprule
\multirow{2}{*}{Architecture} & \multirow{2}{*}{Hidden Dimension} & \multirow{2}{*}{\# Layers} & \multicolumn{2}{c}{nRMSE $\downarrow$} &  \multicolumn{2}{c}{\# Params (millions) } \\
\cmidrule(rl){4-5} \cmidrule(rl){6-7}
 &  &  & Resolution 32 & Resolution 48 & Resolution 32  & Resolution 48  \\
\midrule
S4FFNO & 128 & 4 & \textbf{0.108} & \textbf{0.045} & 2.8 & 3.9 \\
\midrule
FFNO & 128 & 4 & 0.440 & 0.238 & 2.8 & 3.8 \\
FFNO & 128 & 8 & 0.361 & 0.181 & 5.5 & 7.6 \\
FFNO & 256 & 4 & 0.435 & 0.252 & 11.1 & 15.3 \\
FFNO & 256 & 8 & 0.346 & 0.194 & 22.2 & 30.6 \\
\hline
\end{tabular}
}
\caption{Performance of S4FFNO and different model sizes of FFNO on the KS experiment with viscosity $\nu=0.1$ and resolutions 32 and 48. The experimental details are the same as in Table \ref{tab:KS}.}
\label{tab:ffno_complexity}

\end{table*}

\subsection{Navier-Stokes Equation (2D): Study in Observation Noise}
\label{sec:NavierStokes}
The Navier-Stokes equation describes the motion of a viscous fluid. 
Like in \cite{Li2021Fourier}, we consider the incompressible form in the 2D unit torus, which is given by: 
\begin{align*}
    \frac{\partial w(x,t)}{\partial t} + u(x,t) \cdot \nabla w(x,t) &= \nu \Delta w(x,t) + f(x), & x \in (0, 1)^2, t \in (0,T]
\\
    \nabla \cdot u(x,t)& = 0, & x \in (0, 1)^2, t \in [0,T]
\\
    w(x, 0) &= w_0(x), & x \in (0, 1)^2
\end{align*}
Where $w = \nabla \times u$ is the vorticity, $w_0 \in L^2((0, 1)^2; \mathbb{R})$ is the initial vorticity, $\nu \in \mathbb{R}_+$ is the viscosity coefficient, and $f \in L^2((0, 1)^2; \mathbb{R})$ is the forcing function. In general, the lower the viscosity, the more rapid the changes in the solution and the harder it is to solve it numerically or with a neural operator.
We investigate the effect of memory when adding i.i.d. Gaussian noise to the inputs of our neural networks. 
The noise 
is sampled i.i.d. from a Gaussian distribution $\mathcal{N}(0, \sigma)$, and then added to training and test inputs. During training, for each trajectory a different noise (with the same $\sigma$) is sampled at each iteration of the optimization algorithm. The targets in training and testing represent our ground truth, and do not contain added noise.
\begin{figure}[ht]
    \centering
    \begin{minipage}{0.45\textwidth}
            \centering
    \includegraphics[width=\linewidth]{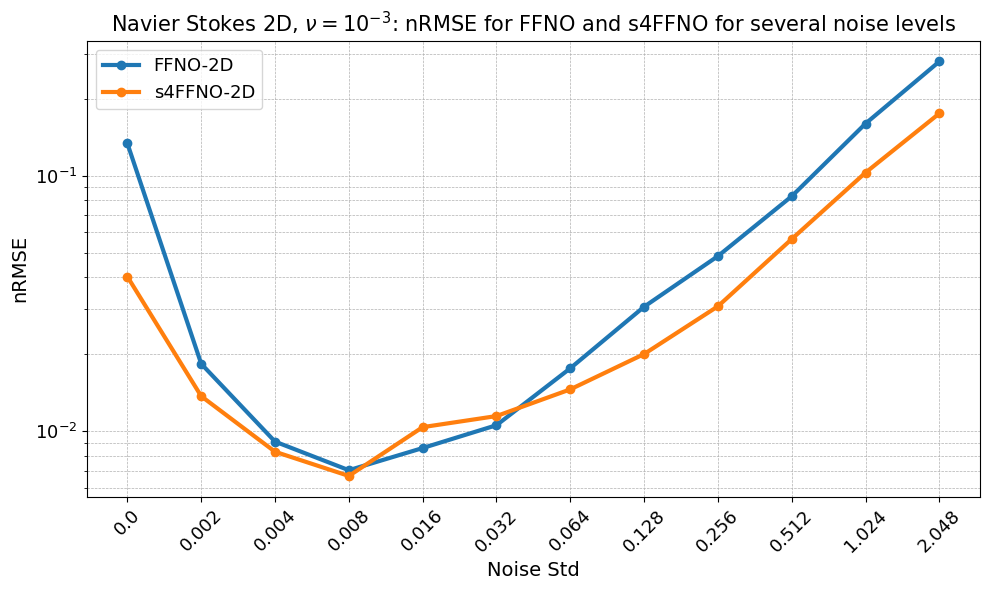}
    \subcaption{$\nu=10^{-3}$, $T=16s$, $N_t=32$}
    \label{fig:NS3-noise}
    \end{minipage}
        \begin{minipage}{0.45\textwidth}
        \centering
        \includegraphics[width=\linewidth]{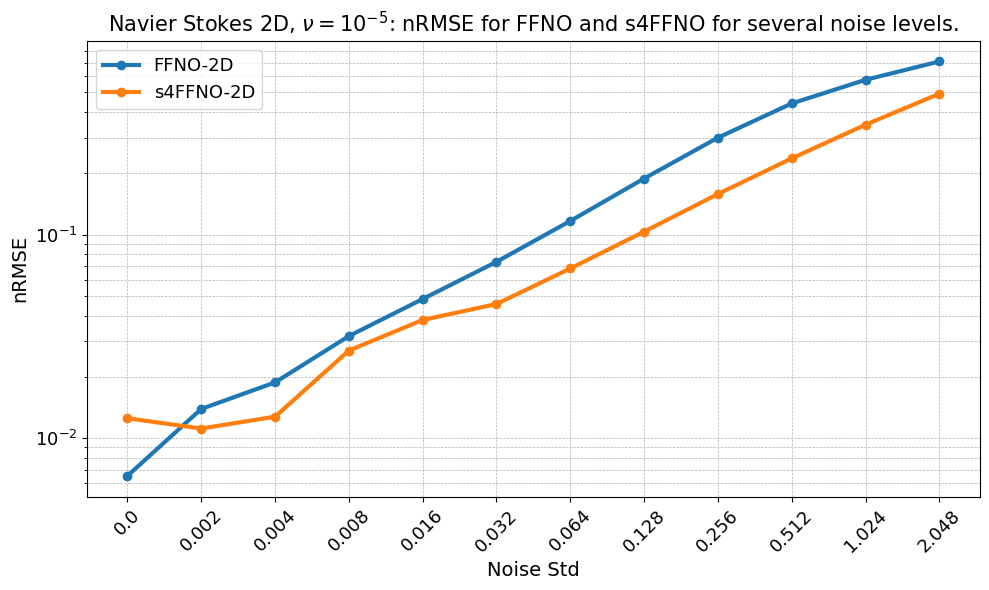}
        \subcaption{$\nu=10^{-5}$, $T=3.2s$, $N_t=32$}
        \label{fig:NS5-noise}
    \end{minipage}\hfill
    \caption{\small{nRMSE of FFNO-2D and S4FFNO-2D trained on Navier-Stokes 2D with different noise standard deviations $\sigma$ added to training and test inputs. Two configurations of viscosity $\nu$ and final time $T$ are shown.}}
    \label{NS-noise}

\end{figure}
In Figure \ref{fig:NS3-noise}, we show the results for $\nu=10^{-3}$ when adding noise levels from $\sigma=0.0$ (no noise) to $\sigma=2.048$. S4FFNO-2D outperforms FFNO-2D across most noise levels, and the difference between the two is especially significant for noise levels beyond $0.128$, where FFNO-2D is around $50\%$ higher than S4FFNO-2D (note the logarithmic scale). For this viscosity, adding small levels of noise actually helps training, which was also observed in other settings in \citet{ffno}. Figure \ref{fig:NS5-noise} shows the same experiment performed with $\nu=10^{-5}$. Again, S4FFNO-2D outperforms FFNO-2D across most noise levels. FFNO-2D losses are similarly around $50\%$ higher for noise levels above $0.032$. In this viscosity, adding these levels of noise does not help performance.

\newpage

\subsection{Relationship with Fraction of Unobserved Modes}
\label{sec:unobserved}

\begin{wrapfigure}[20]{ht}{0.4\textwidth}
    \centering
    \includegraphics[width=\linewidth]{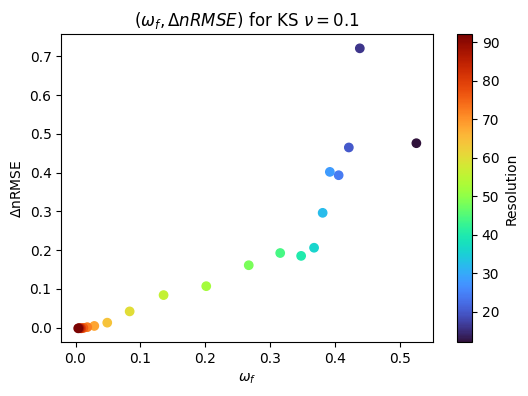}
    \caption{\small{Values of $\omega_f$ and the difference in nRMSE between FFNO and S4FFNO for different resolutions in the KS experiment of Section~\ref{sec:KS} with $\nu=0.1$.
    $\omega_f$ is averaged across all trajectories in the dataset and across all timesteps.}}
    \label{fig:omega}
\end{wrapfigure}

In this section, we provide a simple experiment to quantify the 
effect of the fraction of unobserved modes on the performance of memory based models.
Precisely, suppose $u\in L^2(\Omega; \mathbb{R}^V)$ is the solution of a 1-dimensional PDE at a certain timestep,  and $a_n$ for $ n\in\mathbb{Z}$ is its Fourier Transform. If we observe it at a resolution f, we can only estimate its top $\floor{\frac{f}{2}}$ modes\footnote{
This is a consequence of the Nyquist–Shannon sampling theorem.
}. Thus, we define $\omega_f$ as the ratio of unobserved modes at resolution $f$:
\begin{align}
    \label{eq:omega}
    \omega_f := \frac{\sum_{|n|> \floor{\frac{f}{2}}} |a_n|^2}{\sum_{n\in\mathbb{Z}} |a_n|^2}
\end{align}
$\omega_f$ is an approximate indicator of the
amount of information that is lost when the solution of the PDE is observed at resolution $f$. In practice, $\omega_f$ can be computed by approximating the $a_n$ with the discrete Fourier modes of the solution in the highest resolution available.
  We show that there is a positive correlation between $\omega_f$ and the difference in nRMSE between FFNO and S4FFNO for the KS experiment in Figure~\ref{fig:omega}, and also the for Burgers' experiments of  Appendix \ref{appendix:burgers_experiment} in Figure~\ref{fig:omega_burgers}.
This demonstrates the benefits of memory as a way to compensate for missing information in the observations. 

\section{Conclusion and Future Work}
We study the benefits of maintaining memory while modeling 
time dependent PDE systems. When 
we only observe part of the initial conditions (for example, PDEs observed on low-resolution or with
input noise), the system is no longer Markovian, and the dynamics depend on a 
\emph{memory term}.
Taking inspiration from the Mori-Zwanzig formalism, we introduce 
MemNO, an architecture that combines Fourier Neural Operators (FNO) to model the spatial dynamics of the PDE, and 
the S4 sequence model to incorporate memory of past states. 
Through our experiments on different 1D and 2D PDEs, we show that the MemNO
architecture outperforms the memoryless baselines, particularly when the solution to the PDE has large components on high-frequency Fourier modes. 

We present several avenues for future work. First, our experiments on 
observation noise are limited to the setting where the input noise is i.i.d. 
Further, extending the experiments and observing the effects of memory in
more real-world settings (for example, with non-i.i.d. noise or in the presence of aliasing) seems fertile ground for future work, and also necessary to ensure that the application of this method does not have unintended negative consequences when broadly applied in society. 
\RB{Lastly, while we primarily compare between Markovian and memory architectures, a study on the trade-offs between different memory architectures such as S4FFNO, LSTM-FFNO, S4U-Net and Multi Input FFNO is an interesting direction for future work.
}




\section*{Acknowledgements}
RBR is supported by the “la Caixa” Foundation (ID 100010434). The fellowship code is LCF/BQ/EU22/11930090.
TM is supported in part by CMU Software Engineering Institute via Department of Defense under contract FA8702-15-D-0002.
AR is supported in part by 
NSF awards IIS-2211907, CCF-2238523, and Amazon Research.
\bibliography{bibliography}
\bibliographystyle{bibliography}

\clearpage

\appendix
\clearpage
\section{Additional Related Work}
\textbf{Neural Operators}. The Fourier Neural Operator (FNO) is a neural operator that performs a transformation in the frequency space of the input \citep{Li2021Fourier}. Other models have proposed different inductive biases for neural operators, including physics based losses and constraints~\citep{li2021physics},
using Deep Equilibrium Model (DEQ)~\citep{bai2019deep}
 to design specialized
architectures for steady-state (time-independent) 
PDEs~\citep{marwah2023deep}, and using local message passing 
Graph Neural Networks (GNNs)~\citep{gilmer2017neural,kipf2016semi}
encoders to model irregular geometries~\citep{li2020multipole,li2024geometry}.
Other methodologies to solve PDEs include methods like ~\citep{unet-brandstetter,uno-Rahman2022-as}
that use the U-Net~\citep{ronneberger2015u} architectures 
and works like~\citet{cao2021choose,hao2023gnot}
that introduce different Transformer-based~\citep{vaswani2017attention} 
neural solution operators for modeling both time-dependent and time-independent PDEs.
While most of these methodologies 
are designed for time-dependent PDEs, 
there is no clear consensus of how to use
the past states to predict future states, and most
of these methods predict the PDE states over time
in an autoregressive way by conditioning 
the model on varying lengths of the past states~\citep{Li2021Fourier,ffno,hao2023gnot}.

\textbf{Foundation models}. There have been community efforts 
towards
creating large-scale
foundational models for modeling diverse PDE families~\citep{MPP-McCabe2023-pe,dpot-Hao2024-zl,UPS-shen2024ups}, and weather prediction~\citep{pathak2022fourcastnet,lam2022graphcast}.
\section{Network Architectures}
For all our models, we use a simple spatial positional encoding $E$. In 1-D, if the grid has $f$ equispaced points in $[0,L]$, then $E \in \mathbb{R}^f$ and the positional encoding is defined as $E_i= \frac{i}{L}$ for $0\leq i \leq f-1$. In 2-D, if we have $f\times f$ points in a 2-D equispaced grid in $[0,L_x]\times[0,L_y]$, the positional encoding is defined as $E_{ij} = (\frac{i}{L_x}, \frac{j}{L_y})$. The input lifting operator (i.e. encoder) $r_\text{in}$ is a linear layer that maps a concatenation of input and grid to the hidden dimension $\mathbb{R}^2 \rightarrow \mathbb{R}^{h}$, which is applied to each element of the spatial dimension independently. It is shared across all model architectures. Likewise, for the decoder $\mathcal{R}_{\text{out}}$, we use another linear layer $\mathbb{R}^h \rightarrow \mathbb{R}$. 

\label{sec:architectural_details}
\textbf{Factorized Fourier Neural Operator (FFNO)} \citep{ffno}:  This model is a refinement over the original Fourier Neural Operator \citep{Li2021Fourier}. Given a hidden dimension $h$ and a spatial grid $\mathcal{S}$, its layers $\ell: \mathbb{R}^{|\mathcal{S}|\times h} \rightarrow \mathbb{R}^{|\mathcal{S}|\times h}$ are defined as:
\begin{equation}
    \ell(v) := v + \text{Linear}_{h,h'} \circ \sigma  \circ \text{Linear}_{h',h} \circ \mathcal{K} [v]
\end{equation}

where $\sigma$ is the GeLU activation function \citep{gelu-Hendrycks2016-bp} and $h'$ is an expanded hidden dimension. $\mathcal{K}$ is a kernel integral operator that performs a linear transformation in the frequency space. Denoting by 
$\text{FFT}_\alpha$, $\text{IFFT}_\alpha$ the Discrete Fast Fourier Transform and the Discrete Inverse Fast Fourier Transform along dimension $\alpha$ \citep{Cooley1969FiniteFourier} respectively, the operator can be written as:
\begin{equation}
    \mathcal{K}[v] := \sum_{\alpha \in \{1,...,d\}} \text{IFFT}[R_\alpha \cdot  \text{FFT}_\alpha[v]]
\end{equation}

for learnable matrices of weights $R_\alpha \in \mathbb{C}^{h^2 \times k_\text{max}}$. $k_\text{max}$ is the maximum number of Fourier modes which are used in $\mathcal{K}$. We use all Fourier modes by setting $k_\text{max} = \floor{\frac{f}{2}}$. 

In our experiments, The FFNO model consists of 4 FFNO layers. For experiments in 1D, the hidden dimensions are all 128 ($h_j=128$ for $j=0,1,2,3$) and the expanded hidden dimension of FFNO's MLP $h'$ is $4 \cdot 128$. For experiments in 2D, the hidden dimensions are all 64 and the expanded hidden dimension is $4 \cdot 64$.

\textbf{S4 - Factorized Fourier Neural Operator (S4FFNO)}: This model uses our MemNO framework. To isolate the effect of memory, all layers except the memory layer are the same as FFNO. For the memory layer, we choose an S4 layer \citep{s4-Gu2021-eb} with a state dimension of 64 and a diagonal S4 (S4D) kernel.\footnote{The S4 repository has two available kernels, the diagonal S4 (S4D) and the Normal Plus Low Rank S4 (S4NPLR). In our experiments, we didn't find a significant difference between the two, and chose S4D for simplicity.}

\textbf{Multi Input Factorized Fourier Neural Operator (Multi Input FFNO)}: This architecture uses the solution at the last $K=4$ timesteps as input to predict the next timestep, as originally proposed by \citet{Li2021Fourier}. Thus, this model uses the (uncompressed) memory of the four previous timesteps and it is not Markovian. We choose $K=4$ because, in practice, the number of previous timesteps to which we have access is limited, if any. We also believe that Multi Input FFNO is advantaged by having access to four ground truth observations, whereas the rest of the models only have access to one. Thus, we consider $K=4$ to be a reasonable choice when considering practical applicability and fairness in comparisons.
On the implementation side, the only difference with the FFNO architecture resides in the input lifting operator $\mathcal{R}_{in}$, which takes a concatenation of $u_{t_{i-3}}, u_{t_{i-2}}, u_{t_{i-1}}, u_{t_{i}}$ as input to predict $u_{t_{i+1}}$. In all our experiments, we choose the fourth timestep  of the solution of the PDEs as initial condition for the rest of the models, whereas Multi Input FFNO is given access to the first, second, third, and fourth timesteps for its first prediction. The number of layers and hidden dimensions are the same as FFNO.

\RB{ \textbf{Factformer 1D \citep{factformer:li2023scalable}:} This models uses four linear attention layers over the spatial sequence length and an MLP as output projection. We set the hidden dimension to 64, and each attention layer has 4 heads with a hidden dimension of 128, thus expanding the dimension from 64 to 512. The implementation is taken from \url{https://github.com/BaratiLab/FactFormer} yet making a slight modification for 1D instead of 2D inputs. \citet{factformer:li2023scalable} deals with 2D inputs in the following manner: given a hidden state of a solution $w$ with with spatial dimensions $S_x$ and $S_y$ and hidden dimension $H$, two queries and keys are built from $w$ by applying two different MLPs ($\text{MLP}_x$ and $\text{MLP}_y$) and then taking the mean across $S_y$ and $S_x$, respectively. Thus, we get $q_x$ and $k_x$ of shape $(S_x, H)$, and $q_y$ and $k_y$ of shape $(S_y,H)$. The attention ``values'' $v$ of shape $(S_x, S_y, H)$ are obtained from $w$ by a linear layer. Then two linear attention transformations are applied, first with $q_x$ and $k_x$ across the $S_x$ dimension, and then $q_y$ and $k_y$ across the $S_y$ dimension. For our 1D case we do not have $\text{MLP}_y$, nor $q_y$, $k_y$. Concretely, we only have one $\text{MLP}_x$, we do not take means to compute $q_x$ and $k_x$, and we only apply one linear attention per layer.
}

\textbf{Galerkin Transformer (GKT)} \citep{cao2021choose}: This model uses four linear attention layers over the spatial sequence length. It includes positional information by concatenating the grid coordinates into the queries, keys and values . After the attention layers, two FNO layers (using all Fourier modes) are used. The hidden dimension used in the experiments is 32 (both for the transformer encoders and the spectral regressor). For the experiments of Figure \ref{fig:KSmain-viscosities}, GKT had unstable performance for some resolutions. Thus, for some resolutions we tried a different training setup: a dropout of 0.05 in the linear attention layer and 0.025 in the FFN layer and 50 training epochs instead of 200. We reported the nRMSE of the best configuration. Specifically, the dropout + reduced training epochs helped performance in resolutions [40-64] for $\nu=0.075$, [8-60] for $\nu=0.1$ and [36-48] for $\nu=0.125$ (all inclusive intervals).
The implementation is based on the publicly available code \url{https://github.com/scaomath/galerkin-transformer}.

\textbf{U-Net Neural Operator (U-Net)} \citep{unet-brandstetter}: This model consists of four downsample convolution blocks, a middle convolution block, and four upsample convolution blocks. The upsample blocks have residual connections to the downsample blocks in the typical U-Net fashion. The downsample blocks have channel multipliers [1, 2, 2, 2] and no time embeddings are used. The first hidden dimension is 32. The implementation is based on the repository \url{https://github.com/pdearena/pdearena}.

\RB{
\textbf{S4 - U-Net Neural Operator (S4U-Net)}: This model also uses our MemNO framework. As before, all layers except the memory layer are the same as U-Net. The state dimension is 16 and the we used the S4D kernel. We apply the memory layer after the ``middle" convolution block.
}

\subsection{Parameter and Training Times for Different Architectures}
\
The number of parameters of the different baselines and training times (forward + backward) is shown in Table \ref{table:params}. 

\begin{table}[ht]
\centering
\begin{tabular}{ccc}

\textbf{Architecture}           & \textbf{\# Params (millions)} & \textbf{Training time (miliseconds)} \\ \midrule
Factformer (1D) & 0.65 & 102 \\
GKT & 0.29 & 21 \\
U-Net & 2.68 & 23 \\
FFNO & 4.89 & 28 \\
Multi Input FFNO & 4.89 & 28 \\
S4FFNO & 4.94 & 32 \\

S4U-Net & 2.82 & 25 \\
\end{tabular}
\caption{\RB{
Number of parameter and training times (forward and backward pass) of architectures for the experiments in Section \ref{sec:KS} and Appendix \ref{appendix:mem_unet}. The batch size is 32, the spatial resolution is 64 and the number of timesteps is 25. The GPU is an NVIDIA L40S.
}}
\label{table:params}
\end{table}

\subsection{Algorithmic Complexities of S4FFNO and FFNO}
\RB{
We present the theoretical complexities of the cores of the S4FFNO and FFNO layers (i.e., the spectral convolution of FFNO and the convolution of S4FFNO). Let $S$ be the spatial resolution, $T$ the number of timesteps, $H$ the hidden dimension and $N$ the state dimension of S4. The core spectral convolution of FFNO (Eq. 16) has a Discrete Fourier Transform across the space dimension and a matrix multiplication in the frequency space, which have complexities $O(T H \tilde{S})$ and $O(T S H^2)$ respectively (tildes denote log factors). In contrast, the S4 layer has a Discrete Fourier Transform across the time dimension and it requires building the convolution kernel, which have complexities $O(S H \tilde{T})$ and $O(S H (\tilde{N}+\tilde{T}))$ respectively \citep{s4-Gu2021-eb}. In our cases, $S$ ranges from 32 to 128, and $T$ is either 20, 25 or 32. Thus, in most cases  $O(T H \tilde{T}) < O(T H \tilde{S})$. As for the other term, we use $H=128$ and $N=64$, so $N+T \leq H$ and we also have $O(S H (\tilde{N}+\tilde{T}))) < O(T S H^2)$. Thus, the S4 memory layer requires less computation than a spatial FFNO layer.
}

\section{Burgers' Equation (1D): A study on Low-Resolution}
\label{appendix:burgers_experiment}
The Burgers' equation with viscosity $\nu \in \mathbb{R}_+$ is a nonlinear PDE used as a modeling tool in fluid mechanics, traffic flow, and shock waves analysis. It encapsulates both diffusion and advection processes, making it essential for studying wave propagation and other dynamic phenomena. It is known for exhibiting a rich variety of behaviors, including the formation of shock waves and the transition from laminar to turbulent flow. The viscous Burgers' equation is written as:
$$
u_t + uu_x = \nu u_{xx}
$$
We used the publicly available dataset of the Burgers' equation in the PDEBench repository \citep{pdebench-Takamoto2022-ka} with viscosity $0.001$, which is available at resolution 1024.

We perform experiments at resolutions 64, 128, 256, 512 and 1024 and show results for the models Galerkin Transformer (GKT) ~\citep{cao2021choose}, U-Net neural operator \textbf{(U-Net) }~\citep{unet-brandstetter}, Factorized Fourier Neural Operator (FFNO), Multi Input Factorized Fourier Neural Operator (Multi input FFNO) and our proposed model S4 Factorized Fourier Neural Operator (S4FFFNO). The results are shown in Figure \ref{fig:burgers-resolution}.

In low resolutions, memory-based architectures (Multi Input FFNO and S4FFNO) outperform the best Markovian baseline (FFNO). Specifically, S4FFNO achieves more than $4\times$ less error than FFNO in resolutions 32 and 64 (see Table \ref{tab:KS}). Additionally, S4FFNO has slightly better performance than Multi Input FFNO in high resolutions (512, 1024). Furthermore, we show the difference in nRMSE between FFNO and S4FFNO at each timestep in figure \ref{fig:burgers-timesteps}. We observe that at the first timestep there is no difference between the two models---which is expected because S4FFNO has the exact same architecture as FFNO for the first timestep. Yet as the initial condition is rolled out, there is more history of the trajectory and the difference between FFNO and S4FFNO increases.
\begin{figure}[ht!]
    \centering
    \begin{subfigure}{0.49\textwidth}
        \centering
        \includegraphics[width=\linewidth]{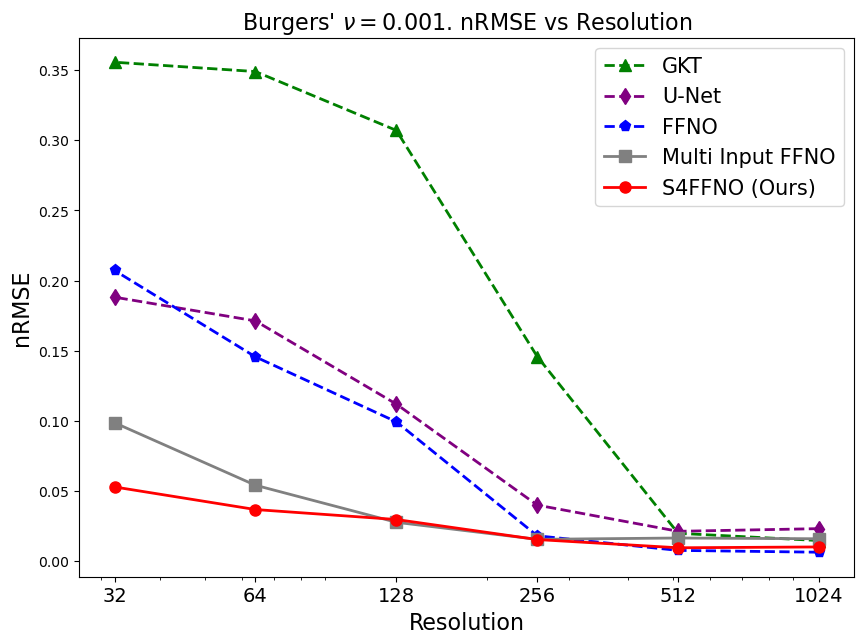}
        \caption{nRMSE of several models at different resolutions.}
        \label{fig:burgers-resolution}
    \end{subfigure}\hfill
\begin{subfigure}{0.49\textwidth}
        \centering
        \includegraphics[width=\linewidth]{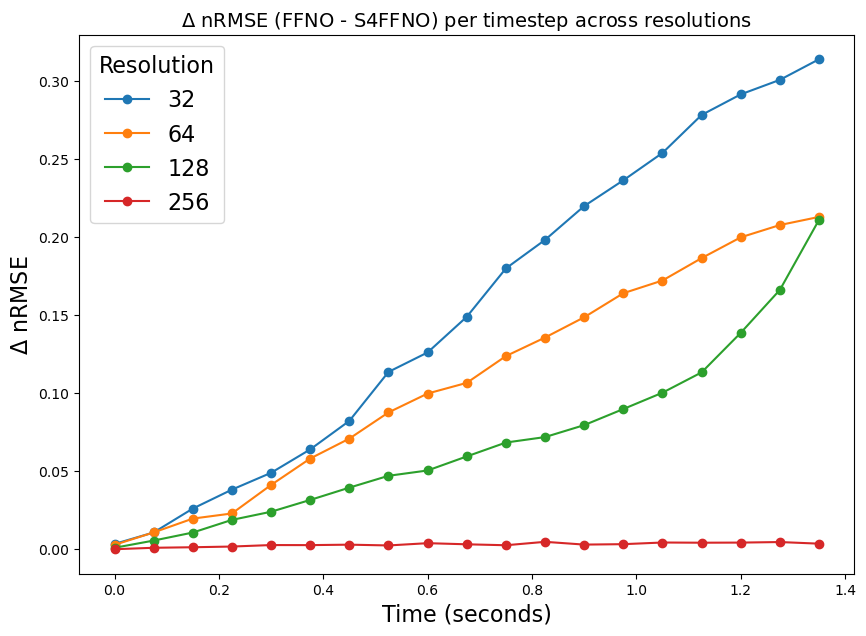}
        \caption{Difference between the nRMSE of FFNO and S4FFNO per timestep (higher difference means better performance of S4FFNO).}
        \label{fig:burgers-timesteps}
    \end{subfigure}
    \caption{Results for the Burgers's PDEBench dataset with viscosity $\nu=0.001$.}
    \label{fig:burgers}
\end{figure}

\subsection{Correlation with Fraction of Unobserved Modes}
As mentioned in Section~\ref{sec:unobserved}
we measure the correlation of 
$\omega_f$ defined in \eqref{eq:omega} with the difference in the nRMSE 
between FFNO and S4FFNO. The results can be seen in Figure~\ref{fig:omega_burgers}.

\begin{figure}[ht!]
    \centering
    \includegraphics[width=0.5\linewidth]{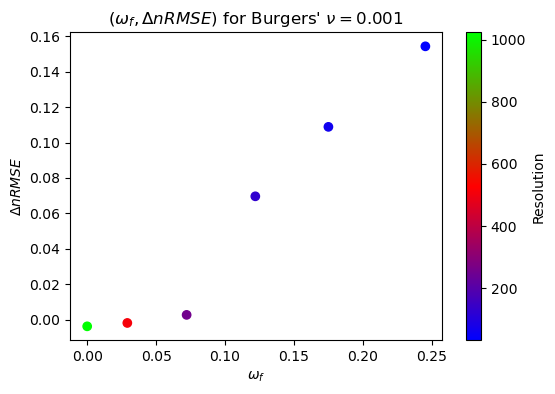}
    \caption{Difference in nRMSE between FFNO and S4FFNO against $\omega_f$ (defined in Equation \ref{eq:omega}) for different resolutions of the Burgers' Equation.
    $\omega_f$ is averaged over all trajectories in the dataset and across all timesteps of the experiment. The value is computed approximating the continuous Fourier modes with the Discrete Fourier modes of the solution in the highest resolution available (1024 for Burgers' Equation).}
        \label{fig:omega_burgers}
\end{figure}

\section{Analysis of High-Frequency Fourier Modes in Common 1-D Datasets}
\label{sec:omega_datasets}
In Section \ref{sec:dataset_selection}, we explained that one of the main criteria for choosing the datasets of our experiments was the high contribution from high-frequency Fourier modes in the solutions of the PDEs. Intuitively, when the solution contains contributions from high-frequency Fourier modes, say higher than a number $k$, then it cannot be approximated accurately from its first $k$ Fourier components (see Definition \ref{def:fourier_truncation}). Therefore, when observed at a finite resolution $f$, only $\floor{\frac{f}{2}}$ Fourier modes can be estimated\footnote{
This is a consequence of Nyquist–Shannon Theorem \citep{shannon1949communication}
}, which is not enough to approximate the solution when $k \gg \floor{\frac{f}{2}}$. In this case, there is an ``unobserved'' part of the solution (which corresponds to the high-frequency components), and thus we can expect the memory term of the Mori-Zwanzig Equation \ref{eq_MZ:GLE} to be non-negligible.

In order to quantitatively measure the importance of the high-frequency components of a function, we propose using $\omega_f$ from \eqref{eq:omega}. This quantity measures the fraction of Fourier modes (weighted by their amplitude) that are above the frequency $\floor{\frac{f}{2}}$, and thus $0\leq \omega_f \leq 1$. When $\omega_f$ is close to 0, then we expect the solution to be very accurately approximated from the Fourier modes that are observed at resolution $f$, so the ``unobserved'' part of the function is very small and thus the memory term of the Mori-Zwanzig Equation is expected to be negligible. Conversely, if $\omega_f$ is large, we expect the memory term to be significant\footnote{
This is not a precise mathematical argument, but rather an intuition that has proven to be helpful in practice for the PDEs we have considered. In general, the interaction of the PDE and the frequency spectrum of the solution is complex and $\omega_f$ by itself is not enough to determine the magnitude of the memory term.}.

The results for $\omega_f$ are shown in Figure \ref{fig:heatmap}.
For most PDEBench datasets the values of $\omega_f$ are very small, even for very small resolutions like 16 (note that the original data is in resolution 1024). Therefore, based on our previous discussion we expect the memory term to be negligible. On several exploratory experiments on these datasets, we indeed saw no benefit of using memory to model PDEs. The only exception is the Burgers' dataset with viscosity $\nu=0.001$, where our experiments in Appendix \ref{appendix:burgers_experiment} show a superior performance of memory-augmented models over Markovian ones for resolutions 64, 128 and 256 (Figure \ref{fig:burgers-resolution}).

In the case of the Kuramoto–Sivashinsky (KS) dataset, we again see that the viscosities that are typical in other works, like $\nu=1.0$ and $\nu=0.5$ in PDE-Refiner \citep{pderefiner-Lippe2023-cp}, do not have a large $\omega_f$, unless the resolutions are low (16 or 32). 
For that reason, we generated our own datasets with lower viscosities, which yield higher values for $\omega_f$ and thus a more challenging benchmark to compare Markovian and memory models. Besides the change of viscosity, PDE-Refiner generation method had a warm-up of $T=72$ seconds, while we did not consider a warm-up. This warm-up explains the higher presence of high frequencies for viscosity $\nu=0.5$ compared to our viscosities at resolution 16. Details and code to generate our datasets are provided in Appendix \ref{appendix:data_generation}.

It can be seen that $\omega_f$ depends on both the parameters of the PDE (i.e. viscosities in Burgers' and KS) and the observation resolution $f$. Thus, a key to understanding the importance of the memory term is observing the \textit{resolution relative to the Fourier frequency spectrum of the solution}, as we noted in Section \ref{sec:KS}. Additionally, another important characteristic that affects $\omega_f$ is the \textit{frequency spectrum of the initial condition}. While the initial condition for KS is generated as a superposition of sinusoidal waves, PDEBench also uses this superposition of waves but applies some transformations to it, like taking the absolute value (see Appendix D of \cite{pdebench-Takamoto2022-ka}). These transformations lead to the appearance of higher order frequencies  in the initial conditional and thus also affect the frequency spectrum of later timesteps. We believe this is why Burgers' with $\nu=0.001$ also exhibits high $\omega_f$ at resolutions 128 and 256 (Figure \ref{fig:heatmap}).

We hope $\omega_f$ can serve as a practical quantity to help practitioners and researchers explore whether to consider memory architectures or not.

\begin{figure}
    \centering
    \includegraphics[width=\linewidth]{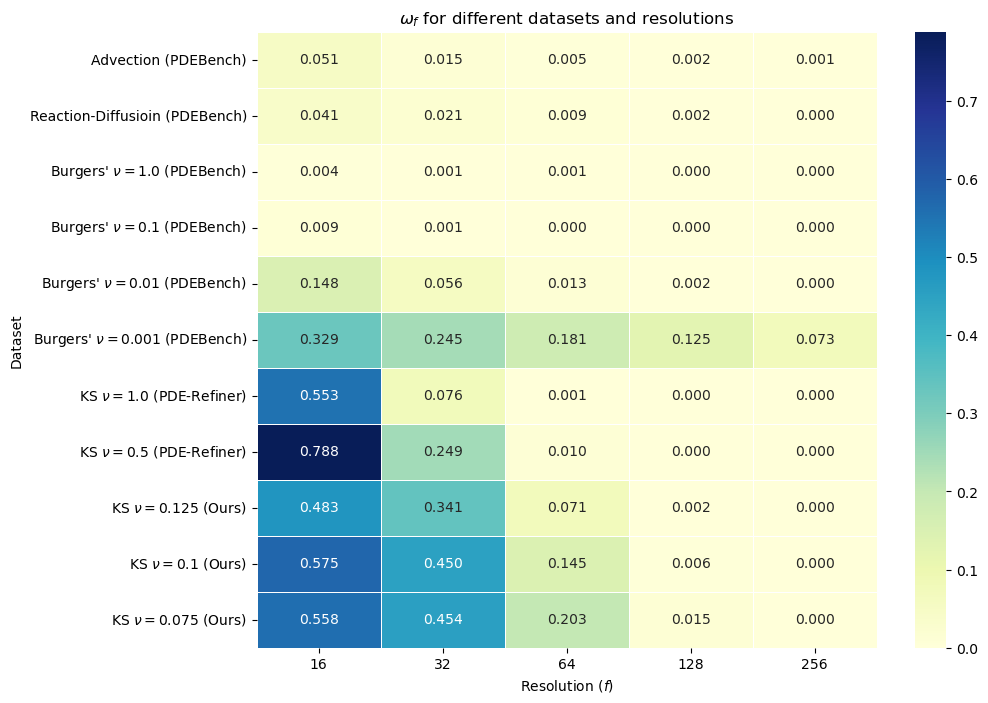}
    \caption{$\omega_f$ for different resolutions $f$ and datasets. $\omega_f$ measures the ratio of Fourier modes that are above frequency $\frac{f}{2}$ (see Equation \ref{eq:omega}). The Advection, Diffusion-Reaction and Burgers' datasets come from PDEBench \citep{pdebench-Takamoto2022-ka} (the Diffusion-Sorption dataset is not considered because it does not have periodic boundary conditions). The KS datasets come from either PDE-Refiner \citep{pderefiner-Lippe2023-cp}, or they are generated by ourselves following Section \ref{appendix:data_generation}. The PDE-Refiner datasets use a $T_\text{warm-up}=72s$, that is, they discard all timesteps of the numerical solvers up to time $72s$. In contrast, our generated KS datasets do not have warm-up period. $\omega_f$ is averaged across all trajectories in the dataset, and also averaged across the first 20 timesteps.  The values of $\omega_f$ are computed approximating the continuous Fourier modes with Discrete Fourier modes of the solution in the highest resolution available (512 for KS datasets and 1024 for all other PDEBench datasets).}
    \label{fig:heatmap}
\end{figure}

\section{Data Generation}
\label{appendix:data_generation}
\subsection{Kuramoto–Sivashinsky Equation}
The Kuramoto-Sivashinsky (KS) equation is given by: 
\begin{align*}
    u_t + uu_x + u_{xx} + \nu u_{xxxx} &= 0 \quad (t,x) \in [0,T]\times[0,L] \\
    u(0,x) & = u_0(x) \quad x \in [0,L]
\end{align*}

We use periodic boundary conditions. Our data generation method is very similar to the one used in PDERefiner \citep{pderefiner-Lippe2023-cp}, except for the three following differences: (1) We do not have a random $\Delta t$ per trajectory (2) We set the initial condition to have eight Fourier modes in the spectrum, whereas \citet{pderefiner-Lippe2023-cp} uses three (3) We do not discard the first generated timesteps of the solution of the PDE. We provide a forked repository with these changes in \url{https://github.com/r-buitrago/LPSDA}, which is based on the original repository of \citet{LPSDA-brandstetter2022lie} \url{https://github.com/brandstetter-johannes/LPSDA}. The generation command for our datasets is (change {\small \texttt{--viscosity}} for the desired value): 

{\small \texttt{python generate/generate\_data.py --experiment=KS --train\_samples=2048 --valid\_samples=256 \textbackslash{} \\ --test\_samples=0 --L=64 --nt=51  --nx=512 
--nt\_effective=51 --viscosity=0.1 --end\_time=5.0 --lmax=8}}

Now, we give an explanation of the generation procedure. We employ the \textit{method of lines} \citep{methodlines-Schiesser1991}, where the spatial dimension is discretized, and the PDE is transformed to a system of Ordinary Differential Equations (ODEs), one per point in the grid. In order to compute the spatial derivative of the solution at each point in the grid, a pseudospectral method is used, where derivatives are computed in frequency space and then converted to the original space through a Fast Fourier Transform. This method is implemented in the \texttt{diff} method of the \texttt{scipy.fftpack} package \citep{scipy-2020SciPy-NMeth}. Similarly, the system of ODEs is solved numerically with a implicit Runge-Kutta method of the Radau IIA family of order 5 \citep{radau-HairerWanner1996}, which is implemented in the \texttt{solve\_ivp} method of \texttt{scipy.integrate}. We refer to the code provided in \citet{LPSDA-brandstetter2022lie} to reproduce this data generation, however certain small modifications have to be made, like using a fixed $\Delta t$ per trajectory and increasing the number of modes in the initial condition.

As for the PDE parameters, we use $L=64$ and $T=2.5$. For the initial condition, we use a superposition of sinusoidal waves: 
\begin{equation*}
    u_0(x) = \sum_{i=0}^{20} A_i \sin\left(\frac{2\pi k_i}{L}  x + \phi_i\right)
\end{equation*}
where for each trajectory, the $A_i$ are sampled from a continuous uniform in $[-0.5,0.5]$, the $k_i$ are sampled from a discrete uniform in $\{1,2,...,8\}$, and the $\phi_i$ are sampled from a uniform uniform in $[0,2\pi]$.  We discretize $[0,T]$ into 26 equispaced points separated by $\Delta t=0.1$.
In the experiments in Section \ref{sec:KS}, for each of the four values of the viscosity ($0.15, 0.125, 0.1, 0.075$), we generated a dataset with spatial resolution 512 with 2048 training samples and 256 test samples. For the experiment in the sequential model ablation in section \ref{transformer}, we generated one dataset with viscosity 0.15 in resolution 256, 4096 training samples and 256 test samples.

\subsection{Burgers' 1D Equation}
The 1D Burgers' equation can be written as:
$$
u_t + uu_x = \nu u_{xx} \quad (t,x)\in [0,T]\times[0,L]
$$
For the Burgers' equation, we take the publicly available Burgers' dataset of PDEBench \citep{pdebench-Takamoto2022-ka} with viscosity $0.001$. Out of the 10000 samples of the dataset, we use $10\%$ for testing. For training, we found it sufficient to use 2048 samples. Additionally, for training and testing we only used the 20 first timesteps, since we observed that after the 20th timestep the diffusion term of the equation $u_{xx}$ attenuates all high frequencies and the solution changes very slowly.

\subsection{Navier-Stokes 2D Equation}
The incompressible Navier-Stokes equation in the 2D unit torus is given by: 
\begin{align*}
    \frac{\partial w(x,t)}{\partial t} + u(x,t) \cdot \nabla w(x,t) &= \nu \Delta w(x,t) + f(x), & x \in (0, 1)^2, t \in (0,T]
\\
    \nabla \cdot u(x,t)& = 0, & x \in (0, 1)^2, t \in [0,T]
\\
    w(x, 0) &= w_0(x), & x \in (0, 1)^2
\end{align*}

For the data generation, we follow the method of \citet{Li2021Fourier}, yet with different temporal and spatial grids. The initial conditions $w_0$ are sampled from a Gaussian Random field $\mathcal{N}\left(0, 7^\frac{3}{2}(-\Delta + 49I)^{-2.5}\right)$ with periodic boundary conditions. The forcing term is $f(x_1,x_2) = 0.1 \left(\sin{2\pi(x_1+x_2)} + \cos{2\pi(x_1+x_2)}\right)$. At each timestep, the velocity is obtained from the vorticity by solving a Poisson equation. Then, spatial derivatives are obtained, and the non-linear term is computed in the physical space and then dealiased. A Crank-Nicholson scheme is used to move forward in time, with a timestep of $10^{-4}$. We use a 512x512 spatial grid which is then downsampled to 64x64 for our experiments. For the viscosity $\nu=10^{-3}$, we use a final time of 16 seconds and sample every 0.5 seconds. For the viscosity $\nu=10^{-5}$, we use a final time of 3.2 seconds and sample every $0.1$ seconds. For more details on the data generation algorithm, we refer to \citet{Li2021Fourier}.

\section{Training Details}
\label{appendix:training_details}
In this section, we will provide a detailed description of the training hyperparameters used in the KS experiments of Section \ref{sec:KS}, in the Burgers experimente of section \ref{appendix:burgers_experiment} and the Navier Stokes experiments of section \ref{sec:NavierStokes}. We start with the training hyperparameters. All our experiments used a learning rate of $0.001$. For the number of epochs, in KS and Burgers, the training was done over 200 epochs with cosine annealing learning scheduling \citep{cosine-loshchilov2017sgdr}; whereas in Navier Stokes we trained for 300 epochs and halved the learning rate every 90. As for the number of samples, KS and Burgers were trained with 2048 samples and Navier Stokes with 1024 samples. Lastly, we observed that the batch size was a sensitive hyperparameter for both the memory and memoryless models (it seemed to affect both equally) so we ran a sweep at each experiment to select the best performing one. In the results shown in the paper, KS and Navier Stokes use a batch size of 32, and Burgers a batch size of 64.

Another relevant detail is the memory length in training, that is, the number of past states that were fed to the memory layer in the MemNO model. In the KS and Burgers experiments, the maximum memory lengths are 20 and 25 (which are the same as the number of timesteps of the dataset). That means  that for the last timestep, the previous 19 or 24 states were fed into the memory layer. However, for GPU memory limitations in Navier Stokes the memory length was 16, half the number of timesteps of each trajectory in the dataset.\footnote{\RB{Under this setup, the GPU memory requirements were around 34 GB. Using the full 32 timesteps for training would require a memory beyond 48GB, which was beyond our GPU capacity (A6000/A6000-Ada GPUs).}
} In this case, the memory was reset after the 16th timestep, i.e. for the 16th timestep the 15 past states were fed to the memory model, yet for the 17th timestep only the 16th timestep was fed. Then, for the 18th timestep, the 17th and 16th were fed, and so on.

As in \citep{ffno}, experiments were trained using teacher forcing. This means that for the prediction of the $i$-th timestep during training, the ground truth of the $i-1$ previous steps was fed to the model (as opposed to the prediction of the model for such steps).

We ran our experiments on A6000/A6000-Ada GPUs. The Navier Stokes 2D experiments required around 34GB of GPU memory for the batch size of 32 and took around 5 hours to finish, whereas the rest of experiments in 1D required a lower GPU memory (less than 10GB) and each run took around 1 or 2 hours, depending on the resolution.

\section{Ablations on the MemNO Architecture }
\label{appendix:sequential_ablations}
In this section we present three ablations regarding the MemNO architecture
\subsection{Ablation: Choice of Sequential Model}
\label{transformer}
In section \ref{sec:MemNO} we introduced MemNO as an architecture framework which allowed the introduction of memory through any choice of a sequential layer, which we chose as S4 in the previous experiments. In this section, we explore two other candidates for the sequential layers: a Transformer and an LSTM. We introduce \textbf{Transformer-FFNO (T-FFNO)} and \textbf{LSTM-FFNO} as two models that are identical to S4FFNO except in the sequential layer, where a Transformer and an LSTM are used respectively. The Transformer layer includes causal masking and a positional encoding, which is defined for $pos$ across the time dimension and $i$ across the hidden dimension by:
\begin{align*}
    PE(pos, 2i) &= \sin\left(\frac{pos}{10000^{\frac{2i}{\text{dim\_model}}}}\right) \\
    PE(pos, 2i + 1) &= \cos\left(\frac{pos}{10000^{\frac{2i}{\text{dim\_model}}}}\right)
\end{align*}

We show results for the KS dataset with viscosity $\nu=0.15$ and different resolutions. This dataset was generated using a resolution of 256 and contains 4096 samples, twice as many compared to the KS datasets of \ref{appendix:data_generation}, given that Transformers are known to perform better in high-data regimes. The results are shown in Figure \ref{fig:ks-transformer}. TFFNO performs significantly worse than S4FFNO across almost all resolutions, and even performs worse than FFNO. In contrast, LSTM-FFNO outperforms FFNO, which shows that MemNO can work with other sequential models apart from S4. The memory term in Equation \ref{eq_MZ:GLE} is a convolution in time, which is equivalent to the S4 layer and very similar to a Recurrent Neural Network (RNN) style layer, as showed in \citet{s4-Gu2021-eb}. We believe that this inductive bias in the memory layer is the reason why both S4FFNO and LSTM-FFNO outperform FFNO. However, S4 was designed with a bias for continuous signals and has empirically proven better performance in these kind of tasks \citep{s4-Gu2021-eb}, which is in agreement with its increased performance over LSTMs in this experiment. Additionally, we observed that LSTMs were unstable to train in Navier Stokes 2D datasets.

Lastly, we make two remarks. Firstly, we believe that Transformers performed worse due to overfitting, given that the train losses were normally comparable or even smaller than the train losses of the rest of the models at each resolution. We hypothesize that the full access to the past of Transformers models might lead to exploiting spurious correlations during training. Modifications of the Transformer layer or to the training hyperparameters as in other works \citep{dpot-Hao2024-zl,cao2021choose,hao2023gnot} might solve this issue. Secondly, recently there has been a surge of new sequential models such as Mamba \citep{mamba-Gu2023-wp, mamba2}, RWQK \citep{RWKV-Peng2023-ik}, xLSTM \citep{xlstm-beck2024xlstm} or LRU \citep{LRU-pmlr-v202-orvieto23a}. We chose S4 over Mamba-type architectures because in our experiments the PDE temporal dynamics do not change, and thus we do not expect the input-dependent selectivity mechanism to be necessary. However, we leave it as future work to study which of these sequential model has better overall performance, and hope that our study on the settings where the memory effect is relevant can help make accurate comparisons between them.

\begin{figure}[ht]
    \centering
    \begin{minipage}{0.7\textwidth}
        \centering
        \includegraphics[width=\linewidth]{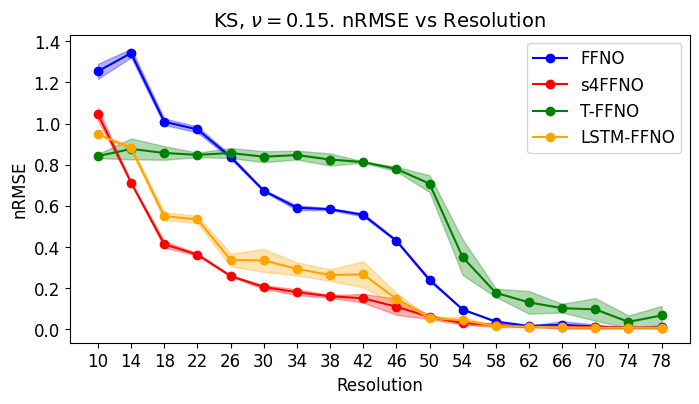}
        
    \end{minipage}\hfill
    \caption{Performance of FFNO, S4FFNO and T-FFNO and LSTM-FFNO in KS with viscosity $\nu=0.15$.}
    \label{fig:ks-transformer}
\end{figure}

\subsection{Ablation: Memory Layer Configuration}
\label{appendix:memory_layer_configurations}
In Section \ref{sec:MemNO} we introduced the memory layer in MemNO as a single layer to be interleaved with neural operator layers. In our experiments, we inserted it after the second layer of a four layer neural operator. In this section, we explore the impact of having different layer configurations, including the possibility of having several memory layers. We will denote the configurations with a sequence of S and T letters. S means a neural operator layer (some sort of Spatial convolution), and T a memory layer (some sort of Time convolution). For example, \textit{SSTSS} denotes the architecture of our experiments, where we have 2 neural operators layers, followed by a memory layer, followed by other 2 neural operator layers. Similarly, \textit{SSSST} denotes 4 neural operators layers followed by a memory layer. In Table \ref{tab:KS_layer_config}, we present the results for the KS dataset with $\nu=0.1$ and final time of 4 seconds for several models. We include the S4FFNO model we used in previous experiments in the first row (with configuration \textit{SSTSS}), and the FFNO model in the last row. In the middle rows, we show different configurations of memory and neural operator layers. It can be observed that all models with at least a memory layer outperform FFNO. There are slight differences between configurations, yet we focused mainly on the comparison to the memoryless model. For that reason, we fixed \textit{SSTSS} configuration in our previous experiment, which was the most efficient (only one memory layer) and symmetric. We leave as further work determining if there are settings where a given configuration pattern can be substantially better than the rest.

\begin{table*}[th!]
\centering
\resizebox{0.75\textwidth}{!}{%
\begin{tabular}{cccc}
\toprule
\multirow{3}{*}{Architecture} & \multicolumn{3}{c}{nRMSE $\downarrow$} \\
\cmidrule(lr){2-4}
& Resolution 32 & Resolution 48 & Resolution 64 \\
\midrule
S4FFNO (\textit{SSTSS}) & 0.123 $\pm$ 0.011 & 0.086 $\pm$ 0.004 & \textbf{0.015 $\pm$ 0.001} \\
\midrule
S4FFNO (\textit{SSSST}) & 0.142 $\pm$ 0.009 & 0.069 $\pm$ 0.001 & 0.017 $\pm$ 0.001 \\
S4FFNO (\textit{STSSTS}) & 0.141 $\pm$ 0.006 & \textbf{0.064 $\pm$ 0.002} & 0.019 $\pm$ 0.001 \\
S4FFNO (\textit{STSTSTST})& \textbf{0.113 $\pm$ 0.006} & 0.070 $\pm$ 0.004 & 0.017 $\pm$ 0.001\\
S4FFNO (\textit{TSSSS}) & 0.129 $\pm$ 0.007 & 0.080 $\pm$ 0.003 & 0.017 $\pm$ 0.001\\
\midrule
FFNO& 0.294 $\pm$ 0.004 & 0.138 $\pm$ 0.013 & 0.021 $\pm$ 0.002 \\
\bottomrule
\end{tabular}}
\caption{KS, $\nu=0.1$. The final time is 4 seconds and the trajectories contain 20 timesteps. For each architecture, we tried 4 learning rates ($0.002$, $0.001$, $0.0005$ and $0.00025$, each with three different seeds. We present the results of the learning rate with the lowest nRMSE averaged across the three seeds. The standard deviation is also with respect to the seeds.}
\label{tab:KS_layer_config}
\end{table*}

\subsection{Ablation: S4U-Net}
\label{appendix:mem_unet}
\RB{
The experiments in Section \ref{sec:experiments} used S4FFNO as our proposed memory model, which was the instantiation of the MemNO framework (Section \ref{sec:MemNO}) with FFNO as the Neural Operator and S4 as the memory layer. 
In the ablations of Section \ref{transformer} we showed that although S4 was the best performing memory model, LSTM also provided good performance, showing the versatility of the framework and the importance of adding memory (regardless of the specific architecture).
In this section, we show that the MemNO can also use a different Neural Operators as the Markovian layer. In particular, we instantiate the MemNO framework using U-Net as the Markovian Neural Operator, and S4 as the memory layer with a state dimension of 16.

The results for the KS experiment (Section \ref{sec:KS}) and Burgers' experiment (section \ref{appendix:burgers_experiment}) are shown in Table \ref{tab:MemUnet}. As in the case of S4FFNO, S4U-Net also improves the performance of U-Net in the cases of low resolution. This shows that modeling memory is useful in low resolution for architectures other than FFNO. However, S4U-Net has worse performance than S4FFNO. We reiterate that the main contribution of our work is studying \textit{when} modeling memory is helpful, as well as providing flexible ways to incorporate it into existing neural operators. We leave for future work the study of the trade-offs between different memory models under different setups.
}

\begin{table*}[th!]
\centering
\resizebox{0.9\textwidth}{!}{%

\begin{tabular}{ccccclc}
\toprule
\multirow{4}{*}{Architecture} & \multirow{4}{*}{Uses memory} & \multirow{4}{*}{Resolution}  & \multicolumn{4}{c}{nRMSE $\downarrow$} \\
\cmidrule(lr){4-7}\morecmidrules\cmidrule(lr){4-7}
& & &  \multicolumn{3}{c}{KS} & Burgers' \\
\cmidrule(rl){4-6} \cmidrule(rl){7-7}
& & & $\nu=0.075$ & $\nu=0.1$ & $\nu=0.125$ & $\nu=0.001$ \\
\midrule
U-Net & No & \multirow{2}{*}{32}  & 0.542 & 0.511 & 0.249 & 0.188 \\
S4U-Net (Ours) & Yes & & \textbf{0.364} & \textbf{0.277} & \textbf{0.104} & \textbf{0.096} \\ 
\midrule
U-Net & No & \multirow{2}{*}{64}  & 0.147 & 0.062 & \textbf{0.022} & 0.171 \\
S4U-Net (Ours) & Yes & & \textbf{0.114} &\textbf{ 0.052} & 0.026 & \textbf{0.070} \\ 
\midrule
U-Net & No & \multirow{2}{*}{128}  & \textbf{0.033} & \textbf{0.027} & \textbf{0.014} & 0.112 \\
S4U-Net (Ours) & Yes & & 0.058 & 0.030 & 0.022 &\textbf{ 0.057} \\ 
\bottomrule
\end{tabular}
}
\caption{\RB{nRMSE values for the S4U-Net architecture at different resolutions for Burgers' and KS with different viscosities. The values of U-Net are the same as the ones in Table \ref{tab:KS} and are provided here for context. More details on training are given in Appendix \ref{appendix:training_details}, on the KS experiment on \ref{sec:KS} and on the Burgers' experiment in Appendix \ref{appendix:burgers_experiment}.}}
\label{tab:MemUnet}
\end{table*}

\section{Ablation on the Modeling of Local Spatial Information}
\label{appendix:spatial_inductive bias}
\RB{
In Section \ref{sec:MemNO} we presented the MemNO framework as a way to build neural operators that model memory, which is inspired by the Mori-Zwanzig formalism. In Equation \ref{eq_MZ:GLE}, it can be seen that the memory term depends on the differential operator $\mathcal{L}$, which can depend on the spatial derivatives of the input. The memory layer of MemNO is applied independently to each spatial dimension of the hidden representation of the input, and thus it does not model spatial derivatives explicitly, although the hidden dimension can in principle encode such local information implicitly. 

In this section, we present several modifications to the S4FFNO architecture, which attempt to provide an inductive bias for modeling spatial information more explicitly. In particular, we try the following modifications: 
\begin{itemize}
 \item \textbf{S4FFNO + Input Gradients}: The numerical gradients of the input $u_t$ are fed to the model encoder. Specifically, $u_t$ is represented as a vector of spatial shape $S$, and the spatial gradients are approximated using second-order accurate central differences method (using the method \texttt{torch.gradient} of the \texttt{pytorch} library \citep{pytorch}). This would directly facilitate storing information about the first order spatial gradients in the hidden dimension.
 \item \textbf{S4FFNO + Convolutional Encoder}: Instead of having a linear layer as encoder, we substitute it with a 1D Convolution of kernel size 3 across the spatial dimension. 
 \item \textbf{S4FFNO + Convolution before memory layer}: Before feeding the sequence of hidden presentations $[v_0, v_1, ..., v_t]$ to the memory layer S4, we apply a 1D Convolution  across the spatial dimension with kernel size of 3. 
 \end{itemize}

 The performance of such modifications on the KS experiment of Section \ref{sec:KS} with viscosity $\nu=0.1$ is shown in Table \ref{tab:spatial_ablation}. It can be seen that none of the modifications improve performance compared to the baseline significantly. We hypothesize that training becomes more difficult with these architecture modifications, and thus they do not provide an improvement in performance. We leave as future work to consider other architectures to model spatial information more explicitly.

 \begin{table}[h!]
\centering
\resizebox{\textwidth}{!}{%
\begin{tabular}{cccccc}
\hline
\toprule
\multirow{2}{*}{Architecture} & \multicolumn{5}{c}{nRMSE $\downarrow$}   \\
\cmidrule(rl){2-6}
 &   Resolution 16 & Resolution 32 & Resolution 48 & Resolution 64  & Resolution 80 \\
 \midrule
FFFNO                             & 1.106 & 0.4461 & 0.2325 & 0.328  & \textbf{0.0040} \\ \midrule
S4FFNO (Base)                     & \textbf{0.3318} & 0.1081 & \textbf{0.0455} & \textbf{0.0111} & 0.0048 \\ \midrule
S4FFNO + Input Gradients          & 0.4433 & 0.1120 & 0.0482 & 0.0120 & 0.0054 \\ \midrule
S4FFNO + Convolutional encoder    & 0.4340 & \textbf{0.1053} & 0.0571 & 0.0138 & 0.0065 \\ \midrule
S4FFNO + Convolution before memory layer & 0.4283 & 0.1224 & 0.0491 & 0.0142 & 0.0057 \\ \midrule
\end{tabular}
}
\caption{\RB{Performance comparison between FFNO, S4FFNO and several architecture modifications to S4FFNO aimed at introducing an inductive bias to model spatial information, see Section \ref{appendix:spatial_inductive bias}. The experiment is performed on the the KS PDE with viscosity $\nu=0.1$ under the same setup as Section \ref{sec:KS}. The architecture details of S4FFNO and FFNO are provided in Appendix \ref{sec:architectural_details}.}}
\label{tab:spatial_ablation}
\end{table}

}
\section{Quantifying the Effect of Memory} 
\label{appendix:quantifying_memory}
We include the proof for Theorem \ref{theorem}.
\begin{proof}


We proceed to the \eqref{eq:nomem} first. Note that $u_1(t), \forall t \geq 0$ can be written as $u_1(t) = a_0^{(t)} \mathbf{e}_0 + a^{(t)}_1 \mathbf{e}_1$. Moreover, by Proposition~\ref{p:sho}, we have 
\begin{align}
 \frac{\partial a^{(t)}_0}{\partial t} &= 2B a^{(t)}_{1} \\
    \frac{\partial a^{(t)}_1}{\partial t} &= a_1^{(t)} + B a_0^{(t)} 
\end{align}
In matrix form, these equations form a linear matrix ODE: 
\[\frac{\partial}{\partial t}\begin{pmatrix} a_0^{(t)} \\ a_1^{(t)} \end{pmatrix} = \begin{pmatrix} 0 & 2B \\ B & 1 \end{pmatrix} \begin{pmatrix} a_0^{(t)} \\ a_1^{(t)} \end{pmatrix}\]
The solution of this ODE is given by $\begin{pmatrix} a_0^{(t)} \\ a_1^{(t)} \end{pmatrix} = \exp\left(t \begin{pmatrix} 0 & 2B \\ B & 1 \end{pmatrix}\right)  \begin{pmatrix} a_0^{(0)} \\ a_1^{(0)} \end{pmatrix}$. By the first statement of Lemma~\ref{l1:growthorder} and the non-negativity of $a_0^{(0)}, a_1^{(0)}$, we get:
\begin{align}
a_0^{(t)} &\leq 10 e^{\sqrt{2}Bt} \left(a_0^{(0)} + a_1^{(0)} \right),\label{eq:firstnom}\\ 
a_1^{(t)} &\leq 10 e^{\sqrt{2}Bt}\left(a_0^{(0)} + a_1^{(0)} \right) \label{eq:secondnom}\end{align}

We proceed to \eqref{eq:wmem}. Note that for any $s \geq 0$, we can write $u_2(s) = \hat{a}_0^{(s)} \mathbf{e}_0 + \hat{a}_1^{(s)} \mathbf{e}_1$ with $\hat{a}_0^{(0)} = a_0^{(0)}$ and $\hat{a}_1^{(0)} = a_1^{(0)}$. By Proposition~\ref{p:sho}, we have
\[\mathcal{Q} \mathcal{L} u_2(x) = B \hat{a}_1^{(s)} \mathbf{e}_2(x) \]
Moreover, given a function $v(x)$, 
the action of the operator $\exp{\mathcal{Q}\mathcal{L}(\tilde{t})}$ on $v$ is given by the solution $w(\tilde{t}, x)$ to the PDE 
\begin{align*}
    \frac{\partial}{\partial t} w(t,x) &= \mathcal{Q}\mathcal{L} w(t,x) \\ 
    w(0,x) &= v(x) 
\end{align*}

If $w(t,x) = \sum_{n \in \mathbb{N}_0} b_n^{(t)} \mathbf{e}_n$ and $\forall n \in \mathbb{N}_0, b_n^{(0)} \geq 0$, we are interested in solving the previous PDE with initial conditions $b^{(0)}_2 = B \hat{a}_1^{(s)}$ and $b^{(0)}_n=0 \ \forall n\neq 2$.

 We claim that the coefficients $\hat{a}_n^{(t)} \geq 0$ $\forall t>0$ and $\forall n\in\{0,1\}$. For $t=0$ this is by definition, and we will prove it for all $t$ by way of contradiction. Suppose the claim is not true, then there exists a $t^*>0$, and some $n^* \in \{0,1\}$ such that $\hat{a}_{n^*}^{(t^*)}=0$, and $\hat{a}_{n}^{(s)}>0$ $\forall n\in\{0,1\}$ and $\forall s < t^*$. 
But from continuity this implies that there exists $0<t'<t^*$ such that $\frac{\partial}{\partial t}\hat{a}_{n^*}^{(t')}<0$. However, it can be easy to see that if $\hat{a}_n^{(s)}>0 \ \forall s\leq t'$, then $\mathcal{P}_1 \mathcal{L} u_2(t')>0$ and $\mathcal{P}_1\mathcal{L} \int_0^{t'} \exp{\mathcal{Q}\mathcal{L}(t-s)}u_2(s) ds > 0$. Therefore, from \eqref{eq:wmem}, $\frac{\partial}{\partial t}\hat{a}_{n^*}^{(t')}>0$, which is a contradiction.

This claim implies that $b_n^{(0)}\geq0 \ \forall n \in \mathbb{N}$, and in turn it implies that $b_n^{(t)}\geq 0 \ \forall n\in\mathbb{N}, t>0$.
Applying $\mathcal{Q} \mathcal{L}$ results in the following inequalities for the coefficients $b_1^{(t)}$, $b_2^{(t)}, b_3^{(t)}$: 

\begin{align}
    \label{eq:b_ode_1}
    \frac{\partial}{\partial t} b^{(t)}_1 &\geq   b^{(t)}_1 + B b^{(t)}_2 \geq B b_2^{(t)} \\
    \label{eq:b_ode_2}
    \frac{\partial}{\partial t} b^{(t)}_2 &\geq  B b^{(t)}_1 + 4 b^{(t)}_2 + B b^{(t)}_3 \geq B b^{(t)}_1 + B b^{(t)}_3\\ 
    \label{eq:b_ode_3}
    \frac{\partial}{\partial t} b^{(t)}_3 &\geq  B b^{(t)}_2 + 9 b^{(t)}_3 \geq  B b^{(t)}_2
\end{align}
Thus, we can write a linear matrix ODE for the vector $(b_1^{(t)}, b_2^{(t)}, b_3^{(t)})$:
\begin{align}
    \label{eq_theorem:b}
    \frac{\partial}{\partial t}\begin{pmatrix} b_1^{(t)} \\ b_2^{(t)} \\ b_3^{(t)} \end{pmatrix} \geq \begin{pmatrix} 0 & B & 0 \\ B & 0 & B \\ 0 & B & 0 \end{pmatrix} \begin{pmatrix} b_1^{(t)} \\ b_2^{(t)} \\ b_3^{(t)} \end{pmatrix}
\end{align}

Therefore, using Lemma \ref{l2:growthorder}, for sufficiently large $B$ we have $b_2^{(t-s)} \geq \frac{ B e^{\sqrt{2}B(t-s)}}{10} \hat{a}_1^{(s)}.$ 

Hence, if we write $\int_0^t \exp{\mathcal{Q}\mathcal{L}(t-s)} \mathcal{Q}\mathcal{L} u_2(s) ds$ in the basis $\{\mathbf{e}_n\}_{n \in \mathbb{N}_0}$, the coefficient for $\mathbf{e}_2$ will be lower bounded by 
\[\int_{0}^t \frac{1}{10} B e^{B (t-s)} a^{(s)}_1 ds \]


Applying the second statement of Lemma~\ref{l1:growthorder} and using the non-negativity of $a^{(0)}_0$ and $a^{(0)}_1$, we have $\hat{a}^{(s)}_1 \geq \frac{1}{10}  e^{\sqrt{2}B s} \left(a_0^{(0)} + a_1^{(0)}\right)$. Hence, the coefficient for $\mathbf{e}_2$ is lower bounded by 
\[\int_{0}^t\frac{1}{10} B e^{\sqrt{2} B (t-s)} \frac{1}{10}  e^{\sqrt{2}B s} \left(a_0^{(0)} + a_1^{(0)}\right) ds \geq \frac{B t}{100}  e^{\sqrt{2}Bt}\ \left(a_0^{(0)} + a_1^{(0)}\right) \]

We finally need to consider what happens after applying the outermost operator $\mathcal{P}_1 \mathcal{L}$. Because of Proposition~\ref{p:sho} again, applying $\mathcal{L}$ makes the coefficient in front of $\mathbf{e}_1$ at least $\frac{B^2 t}{100}  e^{\sqrt{2}Bt} \left(a_0^{(0)} + a_1^{(0)}\right)$. Finally, applying $\mathcal{P}_1$ preserves the coefficient in front of $\mathbf{e}_1$. 

Hence, equation \eqref{eq:wmem} results in the following evolution inequalities:   
\begin{align}
 \frac{\partial \hat{a}^{(t)}_0}{\partial t} &\geq 2B \hat{a}^{(t)}_{1} \\
    \frac{\partial \hat{a}^{(t)}_1}{\partial t} &\geq \hat{a}_1^{(t)} + B \hat{a}_0^{(t)} +  \frac{B^2 t}{100}  e^{\sqrt{2}Bt} \left(a_0^{(0)} + a_1^{(0)}\right) \label{eqtheorem:GLE_memory_ineq}
\end{align}
Using the second statement of Lemma~\ref{l1:growthorder} again we have that  $\hat{a}_0(t) \geq \frac{1}{10}  e^{\sqrt{2}B s} \left(a_0^{(0)} + a_1^{(0)}\right)$. Thus, dropping the (positive) term $\hat{a}_1^{(t)}$ in equation \ref{eqtheorem:GLE_memory_ineq}, we have:
\begin{align}
     \frac{\partial \hat{a}^{(t)}_1}{\partial t} &\geq \left(\frac{1}{10} + \frac{B t}{100}  \right)B e^{\sqrt{2}Bt} \left(a_0^{(0)} + a_1^{(0)}\right)
\end{align}

Integrating this equation yields:
\begin{align}
\hat{a}^{(t)}_1 \geq a^{(0)}_1 + \frac{1}{200} e^{\sqrt{2} B t} \left( \sqrt{2} B t + 10 \sqrt{2} - 1 \right)  \left(a_0^{(0)} + a_1^{(0)}\right)
\end{align}
Thus, we have $a_1^{(t)} \gtrsim B t  e^{\sqrt{2}Bt}\left(a_0^{(0)} + a_1^{(0)}\right)$. Together with equation~\ref{eq:firstnom}, the claim of the Theorem follows. 
\end{proof}

\begin{lemma} There exists $B>0$ sufficiently large such that for all $t>0$ the matrix $ \begin{pmatrix} 0 & 2Bt \\ Bt & t \end{pmatrix}$ satisfies:
\begin{align}
    \forall i, j \in \{1,2\}, \exp(\begin{pmatrix} 0 & 2Bt \\ Bt & t \end{pmatrix})_{i,j} &\leq 10 \exp(\sqrt{2} Bt) \\
        \forall i, j \in \{1,2\}, \exp(\begin{pmatrix} 0 & 2Bt \\ Bt & t \end{pmatrix})_{i,j} &\geq \frac{1}{10} \exp(\sqrt{2} Bt) 
\end{align}
\label{l1:growthorder}
\end{lemma}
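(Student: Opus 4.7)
The plan is to diagonalize the matrix $A = \begin{pmatrix} 0 & 2B \\ B & 1 \end{pmatrix}$, observe that the matrix in the statement is just $tA$, write $\exp(tA)$ in closed form, and then read off entrywise asymptotics in the large-$B$ limit.

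First, I would compute the eigenvalues of $A$ via the characteristic polynomial $\lambda^2 - \lambda - 2B^2 = 0$, obtaining $\lambda_\pm = \tfrac{1 \pm \sqrt{1+8B^2}}{2}$. A Taylor expansion in $1/B$ gives $\lambda_+ = \sqrt{2}B + \tfrac{1}{2} + O(1/B)$ and $\lambda_- = -\sqrt{2}B + \tfrac{1}{2} + O(1/B)$, so in particular $\lambda_+ > 0 > \lambda_-$, $\lambda_+\lambda_- = -2B^2$ (product of roots), and $\lambda_+ - \lambda_- = \sqrt{1+8B^2}$. Eigenvectors can be chosen as $v_\pm = (2B, \lambda_\pm)^\top$, giving a diagonalization $A = PDP^{-1}$ with $P = \begin{pmatrix} 2B & 2B \\ \lambda_+ & \lambda_- \end{pmatrix}$ and $D = \mathrm{diag}(\lambda_+, \lambda_-)$, together with the explicit inverse $P^{-1}$ of determinant $-2B(\lambda_+ - \lambda_-)$.

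Next, I would expand $\exp(tA) = P\,\mathrm{diag}(e^{\lambda_+ t}, e^{\lambda_- t})\,P^{-1}$ into its four entries. A short calculation shows that each entry is a linear combination of $e^{\lambda_+ t}$ (with a strictly positive coefficient) and $e^{\lambda_- t}$, where the coefficients are rational functions of $B$ that tend, as $B \to \infty$, to strictly positive limits in a bounded range roughly of the form $[1/(2\sqrt{2}),\,1/\sqrt{2}]$. Since $\lambda_- < 0$, the $e^{\lambda_- t}$ contribution is at most $1$ uniformly in $t>0$ and is exponentially dominated by $e^{\lambda_+ t}$. Consequently, each entry of $\exp(tA)$ lies between two universal positive multiples of $e^{\lambda_+ t}$ for $B$ sufficiently large.

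The final step is translating a bound on $e^{\lambda_+ t}$ into the desired bound on $e^{\sqrt{2}Bt}$. Because $\lambda_+ - \sqrt{2}B = \tfrac{1}{2} + O(1/B)$, the ratio $e^{(\lambda_+ - \sqrt{2}B)t}$ is $O(1)$ on the time scales relevant to Theorem~\ref{theorem}, so the constants $10$ and $1/10$ in the statement are generous enough to absorb it. This last point is the main obstacle: strictly speaking, the inequality does not hold uniformly in $t > 0$ because of the $O(1)$ additive shift in the exponent; I would resolve this either by restricting to a bounded time horizon (sufficient for the way the lemma is invoked in the proof of Theorem~\ref{theorem}), or by rephrasing the conclusion with $\sqrt{2}B$ replaced by $\lambda_+$ (or by $(\sqrt{2}+\epsilon)B$ for any fixed $\epsilon > 0$ and $B$ large), after which the downstream argument goes through unchanged.
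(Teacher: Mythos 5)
Your route matches the paper's: it too writes $\exp(tA)$ in closed form via the eigenvalues $\lambda_\pm = \tfrac{1 \pm \sqrt{1+8B^2}}{2}$, giving
\begin{align*}
\exp(tA) = \frac{1}{2\sqrt{8B^2+1}}\begin{pmatrix}\sqrt{8B^2+1}\,g - h & 4Bh\\ 2Bh & \sqrt{8B^2+1}\,g + h\end{pmatrix},
\qquad g = e^{\lambda_+ t} + e^{\lambda_- t},\ \ h = e^{\lambda_+ t} - e^{\lambda_- t},
\end{align*}
and then asserts the bounds follow by inspection. Your closing caveat is the crux and is correct: since $\lambda_+ = \sqrt{2}B + \tfrac12 + O(1/B)$ strictly exceeds $\sqrt{2}B$, the ratio $e^{\lambda_+ t}/e^{\sqrt{2}Bt} = e^{(\lambda_+-\sqrt{2}B)t}$ is unbounded in $t$, so the upper bound ``$\leq 10\,e^{\sqrt{2}Bt}$ for all $t>0$'' cannot hold for any fixed $B$; the paper's lemma as literally stated is false, and its one-line ``Thus, the statement follows'' does not address this. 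There is a second issue your sketch does not flag: the lower bound also fails near $t=0$, because the off-diagonal entries $\tfrac{2Bh}{\sqrt{8B^2+1}} \approx 2Bt \to 0$ as $t\to 0^+$ while $\tfrac{1}{10}e^{\sqrt{2}Bt} \to \tfrac{1}{10}$. Neither issue threatens Theorem~\ref{theorem} in spirit, since its conclusions are stated up to constants and are near-vacuous as $t\to 0$, but the lemma should be repaired along the lines you propose — replace $\sqrt{2}B$ by $\lambda_+$ (or $\sqrt{1+8B^2}/2$) in the exponent and restrict $t$ to a compact interval bounded away from zero — and the downstream constants in Theorem~\ref{theorem} adjusted accordingly.
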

\begin{proof}
By direct calculation, we have: 

\begin{align*}
\exp(\begin{pmatrix} 0 & 2Bt \\ Bt & t \end{pmatrix})
&= \frac{1}{2 \sqrt{8B^2 + 1}} 
\begin{pmatrix}  \sqrt{8 B^2 + 1} g(B,t) - h(B,t)
&  4B h(B,t) \\
2B h(B,t) &
\sqrt{8 B^2 + 1} g(B,t) + h(B,t)
\end{pmatrix} 
\end{align*}
where:
$$g(B,t) = e^{\frac{1}{2}\left(\sqrt{8 B^2 + 1}+1\right)t} + e^{-\frac{1}{2}\left(\sqrt{8 B^2 + 1} - 1\right)t} $$
$$h(B,t) = e^{\frac{1}{2}\left(\sqrt{8 B^2 + 1}+1\right)t} - e^{-\frac{1}{2}\left(\sqrt{8 B^2 + 1} - 1\right)t} $$


Thus, the statement follows. 

\end{proof}

\begin{lemma} For all $B > 0$, the matrix $\begin{pmatrix} 0 & B & 0 \\ B & 0 & B \\ 0 & B & 0 \end{pmatrix} $ satisfies:
\begin{align}
    \forall i, j \in \{1,2,3\}, \exp(\begin{pmatrix} 0 & B & 0 \\ B & 0 & B \\ 0 & B & 0 \end{pmatrix})_{i,j} &\geq \frac{1}{10} \exp(\sqrt{2} B)
\end{align}
\label{l2:growthorder}
\end{lemma}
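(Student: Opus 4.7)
I will compute $\exp(M)$ in closed form and then verify the lower bound entry by entry. Writing $M = BM'$ with $M' = \begin{pmatrix} 0 & 1 & 0 \\ 1 & 0 & 1 \\ 0 & 1 & 0 \end{pmatrix}$, a direct multiplication yields the identity $M'^3 = 2M'$, so the minimal polynomial of $M'$ is $\lambda(\lambda^2-2)$ and its spectrum is $\{0, \pm\sqrt{2}\}$. Two equivalent routes give the same closed form: (a) spectrally decomposing $M$ via the orthonormal eigenvectors $u_{\pm} = \tfrac{1}{2}(1,\pm\sqrt{2},1)^\top$ and $u_0 = \tfrac{1}{\sqrt{2}}(1,0,-1)^\top$, or (b) summing the exponential series using $M'^{2k+1} = 2^k M'$ and $M'^{2k} = 2^{k-1}M'^2$ for $k\geq 1$. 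Either way,
\begin{equation*}
\exp(M) \;=\; I \;+\; \frac{\sinh(\sqrt{2}B)}{\sqrt{2}}\, M' \;+\; \frac{\cosh(\sqrt{2}B)-1}{2}\, M'^2, \qquad M'^2 = \begin{pmatrix}1 & 0 & 1\\ 0 & 2 & 0\\ 1 & 0 & 1\end{pmatrix}.
\end{equation*}

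The six distinct entries of $\exp(M)$ then read off as three values: $\exp(M)_{11} = \exp(M)_{33} = \tfrac{1+\cosh(\sqrt{2}B)}{2}$ and $\exp(M)_{22} = \cosh(\sqrt{2}B)$ (diagonal and center); $\exp(M)_{12} = \exp(M)_{23} = \tfrac{\sinh(\sqrt{2}B)}{\sqrt{2}}$ (near off-diagonals); and $\exp(M)_{13} = \tfrac{\cosh(\sqrt{2}B)-1}{2}$ (corner). Factoring $e^{\sqrt{2}B}$ out of each, the stated lower bound reduces to three scalar inequalities in $B$: $\cosh(\sqrt{2}B) \geq \tfrac{1}{10}e^{\sqrt{2}B}$ (diagonal), $\tfrac{1-e^{-2\sqrt{2}B}}{2\sqrt{2}} \geq \tfrac{1}{10}$ (near off-diagonals), and $\tfrac{(1-e^{-\sqrt{2}B})^2}{4} \geq \tfrac{1}{10}$ (corner). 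The diagonal check is immediate from $\cosh x \geq \tfrac{1}{2}e^x$; the other two are monotone increasing in $B$ and saturate at $\tfrac{1}{2\sqrt{2}} \approx 0.354$ and $\tfrac{1}{4}$ respectively, both comfortably above $\tfrac{1}{10}$.

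The main obstacle is the binding constraint at the corner entry: its left-hand side vanishes quadratically as $B \to 0^+$, so the universal quantifier ``for all $B>0$'' cannot be established without locating the explicit threshold at which this last inequality becomes tight, namely $\sqrt{2}B_\star = \log\tfrac{1}{1-\sqrt{2/5}} \approx 1$. The closed-form expression above is precisely the tool needed to pin this threshold down. I would therefore close the plan by either (a) refining the hypothesis to ``for $B \geq B_\star$'' --- matching the ``sufficiently large $B$'' hypothesis already present in Lemma~1 and reused in the proof of Theorem~1, where this lemma is applied with the effective argument $B(t-s)$ which is large in the regime of interest --- or (b) replacing the absolute constant $\tfrac{1}{10}$ with the $B$-dependent quantity $\tfrac{(1-e^{-\sqrt{2}B})^2}{4}$ read directly off the exact formula; either substitution preserves the way the lemma is invoked inside the proof of Theorem~1.
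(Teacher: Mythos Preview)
Your approach is the same as the paper's: both compute the matrix exponential in closed form and read off each entry. Your formulas match the paper's exactly (the paper writes them in terms of $e^{\pm\sqrt{2}B}$ rather than $\cosh/\sinh$, but they agree entry by entry).

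You are also correct that the lemma as stated is \emph{false}: the $(1,3)$ entry is $\tfrac{\cosh(\sqrt{2}B)-1}{2}\sim \tfrac{B^2}{2}$ and the $(1,2)$ entry is $\tfrac{\sinh(\sqrt{2}B)}{\sqrt{2}}\sim B$ as $B\to 0^+$, both of which are below $\tfrac{1}{10}e^{\sqrt{2}B}\to\tfrac{1}{10}$. The paper's proof simply writes the closed form and says ``thus the statement follows,'' so it does not address this; your diagnosis and proposed fixes (restrict to $B\ge B_\star$, or replace $\tfrac{1}{10}$ by the exact $B$-dependent corner value) are sound.

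One refinement: in the proof of Theorem~\ref{theorem} the lemma is invoked only to bound $b_2^{(t-s)}$ from an initial condition supported on the second coordinate, i.e.\ only the $(2,2)$ entry of the exponential is actually used. That entry is $\cosh(\sqrt{2}B(t-s))\ge\tfrac{1}{2}e^{\sqrt{2}B(t-s)}$, which holds for \emph{all} arguments, so the downstream use is unaffected even though $B(t-s)$ can be arbitrarily small inside the integral. The cleanest repair is therefore not a threshold on $B$ but simply to restrict the lemma to the $(2,2)$ entry (or to the diagonal entries), for which your first scalar inequality already suffices.
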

\begin{proof}
By direct calculation:
\begin{align*}
\exp(\begin{pmatrix} 0 & B & 0 \\ B & 0 & B \\ 0 & B & 0 \end{pmatrix})_{i,j} &= \\
&\frac{1}{4} e^{-\sqrt{2} B} 
\begin{pmatrix}
2 e^{\sqrt{2} B} + e^{2 \sqrt{2} B} + 1 & \sqrt{2} e^{2 \sqrt{2} B} - \sqrt{2} & -2 e^{\sqrt{2} B} + e^{2 \sqrt{2} B} + 1 \\
\sqrt{2} e^{2 \sqrt{2} B} - \sqrt{2} & 2 (e^{2 \sqrt{2} B} + 1) & \sqrt{2} e^{2 \sqrt{2} B} - \sqrt{2} \\
-2 e^{\sqrt{2} B} + e^{2 \sqrt{2} B} + 1 & \sqrt{2} e^{2 \sqrt{2} B} - \sqrt{2} & 2 e^{\sqrt{2} B} + e^{2 \sqrt{2} B} + 1
\end{pmatrix}
\end{align*}
Thus, the statement follows.

\end{proof}

\section{Definition of Periodic Boundary Conditions}
For completeness, we give a precise definition of periodic boundary conditions for the PDE defined in Definition \ref{def:PDE}:
\begin{defi}[Periodic Boundary Conditions]
    \label{def:periodic_boundary_conditions}
    For a PDE given by Definition \ref{def:PDE} with $\Omega = [0,L]^d$, we define the periodic boundary conditions as the condition: 
    $$u(x_1, \cdots, x_{k-1}, 0, x_{k+1}, \cdots x_d) = u(x_1, \cdots, x_{k-1}, L, x_{k+1}, \cdots x_d) $$
    for all $ (x_1, \cdots, x_{k-1}, x_{k+1}, \cdots, x_L) \in [0,L]^{d-1}$ and all $k=1, \cdots, d$.
\end{defi}

\newpage
\section{MemNO Pseudocode}
\label{appendix:memno_pseudocode}
In Figure \ref{fig:memno_pseudocode} we provide pseudocode for the MemNO framework (Section \ref{sec:MemNO}) in PyTorch ~\citep{pytorch}.
\begin{figure}[H]
    \centering
    \begin{lstlisting}[language=Python]
from typing import List
from einops import rearrange
import torch
import torch.nn as nn

class MemNO(nn.Module):
    '''
    Notation: 
        B: batch size
        T: temporal dimension
        S: spatial dimension (1D)
        H: hidden dimension
    '''
    def __init__(self, markovian_layers: List[nn.Module], memory_layer: nn.Module, memory_position: int):
        '''
        Args:
            markovian_layers: List of nn.Module that maps inputs (B, S, H) to outputs (B, S, H).
            memory_layer: nn.Module that maps inputs (B, T, H) to outputs (B, T, H). 
                This must be a causal and sequential model.
            memory_position: Position of memory layer in the model.
        '''
        super().__init__()
        self.markovian_layers = nn.ModuleList(markovian_layers)
        self.memory_layer = memory_layer
        self.memory_position = memory_position
    
    def forward(self, x: torch.Tensor) -> torch.Tensor:
        '''
        Args:
            x: encoded representation of the solution of the PDE (B, T, S, H) 
                (i.e. [u_0, ..., u_{T-1}])
        Output:
            Prediction of the solution of the PDE at the next timestep (B, T, S, H) 
                (i.e. [\hat{u}_1, ..., \hat{u}_{T}])
        '''
        B, T, S, H = x.shape
        x = rearrange(x, 'b t s h -> (b t) s h')
        for markov_layer in self.markovian_layers[:self.memory_position+1]
            x = markov_layer(x)
        x = rearrange(x, '(b t) s h -> (b s) t h', b=B)
        x = self.memory_layer(x)
        x = rearrange(x, '(b s) t h -> (b t) s h', b=B)
        for markov_layer in self.markovian_layers[self.memory_position+1:]:
            x = markov_layer(x)
        return rearrange(x, '(b t) s h -> b t s h', b=B)
    \end{lstlisting}
    \caption{Pseudocode for the MemNO framework (see Section \ref{sec:MemNO}) in 1-D. Encoder and Decoder layers are omitted for clarity (see details in Appendix \ref{sec:architectural_details}). }
    \label{fig:memno_pseudocode}
\end{figure}

\end{document}